\def\eqref#1{equation~\ref{#1}}
\def\1{\bm{1}}
\def\rvx{{\mathbf{x}}}
\def\rvz{{\mathbf{z}}}
\def\vzero{{\bm{0}}}
\def\vtheta{{\bm{\theta}}}
\def\vmu{{\bm{\mu}}}
\def\vphi{{\bm{\phi}}}
\def\vpsi{{\bm{\psi}}}
\def\vx{{\bm{x}}}
\def\vz{{\bm{z}}}
\def\evmu{{\mu}}
\def\evx{{x}}
\def\mG{{\bm{G}}}
\def\mI{{\bm{I}}}
\def\mT{{\bm{T}}}
\def\mX{{\bm{X}}}
\def\mZ{{\bm{Z}}}
\def\mPhi{{\bm{\Phi}}}
\def\mLambda{{\bm{\Lambda}}}
\def\mSigma{{\bm{\Sigma}}}
\DeclareMathAlphabet{\mathsfit}{\encodingdefault}{\sfdefault}{m}{sl}
\SetMathAlphabet{\mathsfit}{bold}{\encodingdefault}{\sfdefault}{bx}{n}
\def\sZ{{\mathbb{Z}}}
\newtheorem{thm} {Theorem}
\newtheorem{lem}[thm] {Lemma}
\title{Langevin Autoencoders for Learning Deep Latent Variable Models}
\author{%
  Shohei Taniguchi \\
  The University of Tokyo\\
  \texttt{taniguchi@weblab.t.u-tokyo.ac.jp} \\
   \And
  Yusuke Iwasawa \\
  The University of Tokyo\\
  \texttt{iwasawa@weblab.t.u-tokyo.ac.jp} \\
  \And
  Wataru Kumagai \\
  The University of Tokyo\\
  \texttt{kumagai@weblab.t.u-tokyo.ac.jp} \\
  \And
  Yutaka Matsuo \\
  The University of Tokyo\\
  \texttt{matsuo@weblab.t.u-tokyo.ac.jp} \\
}
\begin{document}

\maketitle

\begin{abstract}
  Markov chain Monte Carlo (MCMC), such as Langevin dynamics, is valid for approximating intractable distributions. However, its usage is limited in the context of deep latent variable models owing to costly datapoint-wise sampling iterations and slow convergence. This paper proposes the {\it amortized Langevin dynamics} (ALD), wherein datapoint-wise MCMC iterations are entirely replaced with updates of an encoder that maps observations into latent variables. This amortization enables efficient posterior sampling without datapoint-wise iterations. Despite its efficiency, we prove that ALD is valid as an MCMC algorithm, whose Markov chain has the target posterior as a stationary distribution under mild assumptions. Based on the ALD, we also present a new deep latent variable model named the {\it Langevin autoencoder} (LAE). Interestingly, the LAE can be implemented by slightly modifying the traditional autoencoder. Using multiple synthetic datasets, we first validate that ALD can properly obtain samples from target posteriors. We also evaluate the LAE on the image generation task, and show that our LAE can outperform existing methods based on variational inference, such as the variational autoencoder, and other MCMC-based methods in terms of the test likelihood.
\end{abstract}

\section{Introduction}
\label{sec:introduction}
Variational inference (VI) and Markov chain Monte Carlo (MCMC) are two practical tools to approximate intractable distributions. 
Recently, VI has been dominantly used in deep latent variable models (DLVMs) to approximate the posterior distribution over the latent variable $\rvz$ given the observation $\rvx$, i.e., $p \left( \rvz \mid \rvx \right)$. 
At the core of the success of VI is the invention of amortized variational inference (AVI) \citep{kingma2013auto,rezende2014stochastic}, which replaces optimization of datapoint-wise variational parameters with an encoder that predicts latent variables from observations. 
The framework of learning DLVMs based on AVI is called the variational autoencoder (VAE), which is widely used in applications \citep{kingma2014semi,an2015variational,zhang2016variational,eslami2018neural,kumar2018consistent}.
An important advantage of AVI over traditional VI is that the optimized encoder can be used to infer a latent representation even for new data in test time, expecting its generalization ability.
On the other hand, the approximation power of AVI (or VI itself) is limited because it relies on tractable distributions for complex posterior approximations of DLVMs as shown in Figure \ref{fig:graphical_model} (a).
Although there have been attempts to improve their flexibility (e.g., normalizing flows \citep{rezende2015variational,kingma2016improved,van2018sylvester,huang2018neural}), such methods typically have architectural constraints (e.g., invertibility in normalizing flows).

\begin{figure}[tb]
    \centering
    \begin{minipage}{0.24\hsize}
        \centering
        \includegraphics[width=\hsize]{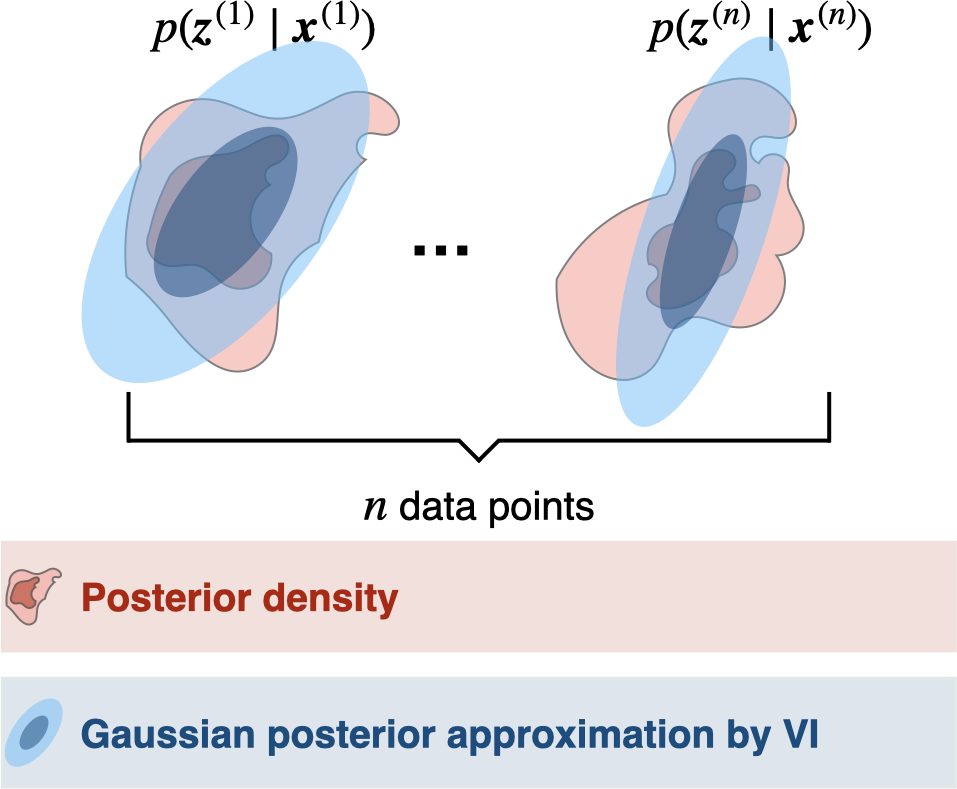}
    \end{minipage}
    \begin{minipage}{0.24\hsize}
        \centering
        \includegraphics[width=\hsize]{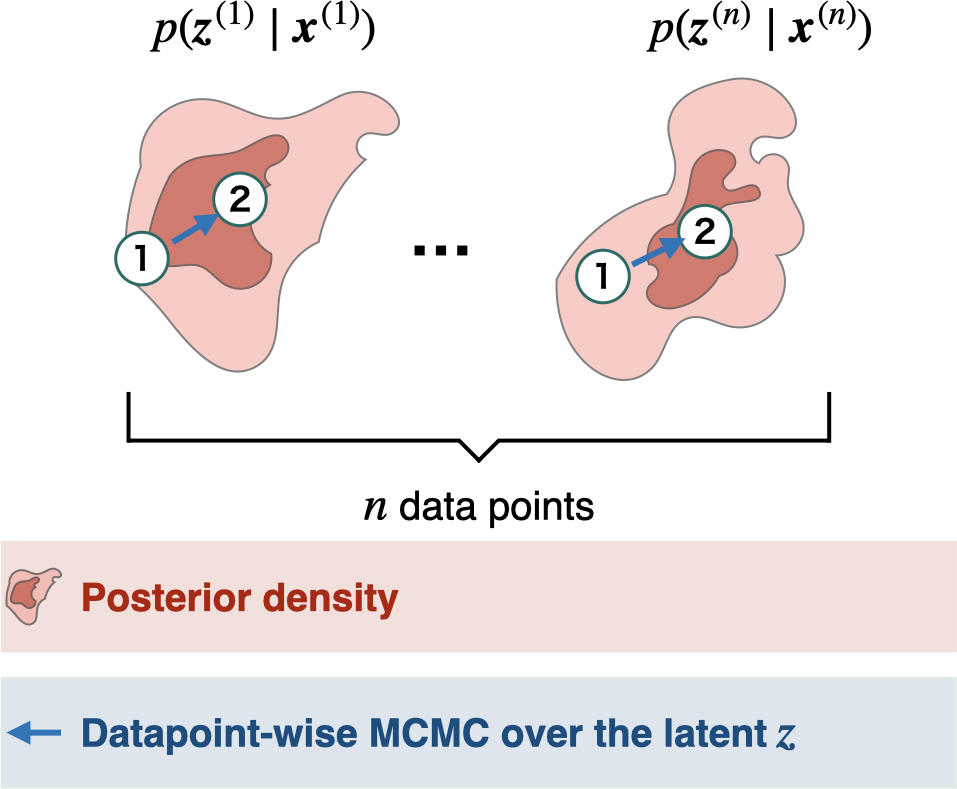}
    \end{minipage}
    \begin{minipage}{0.24\hsize}
        \centering
        \includegraphics[width=\hsize]{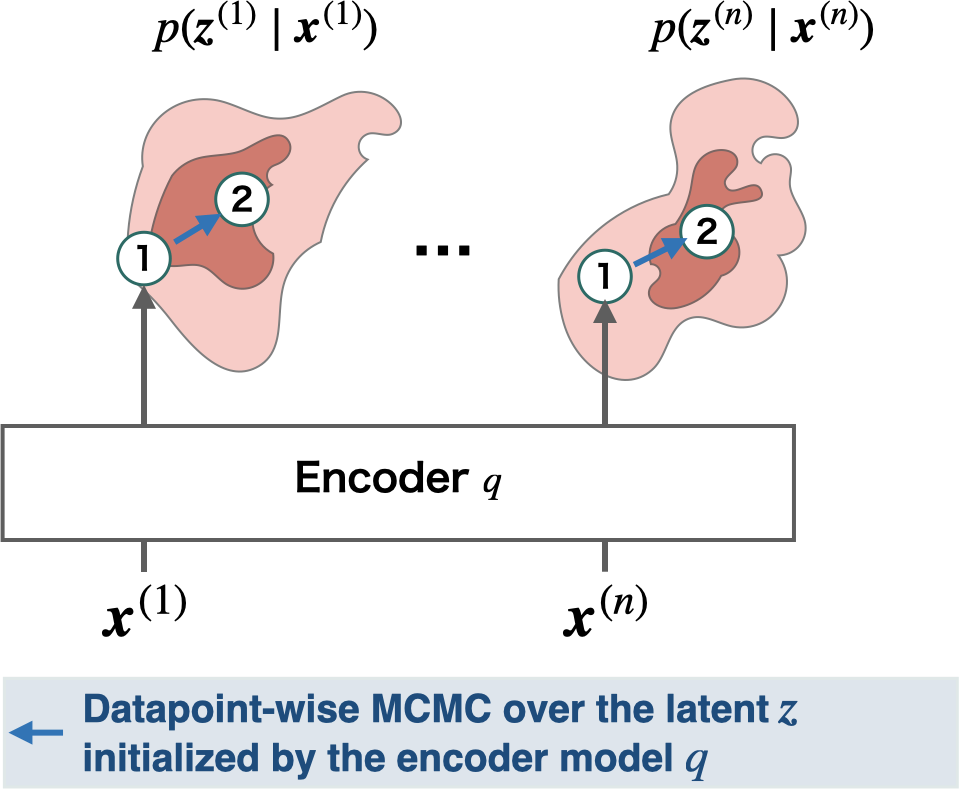}
    \end{minipage}
    \begin{minipage}{0.24\hsize}
        \centering
        \includegraphics[width=\hsize]{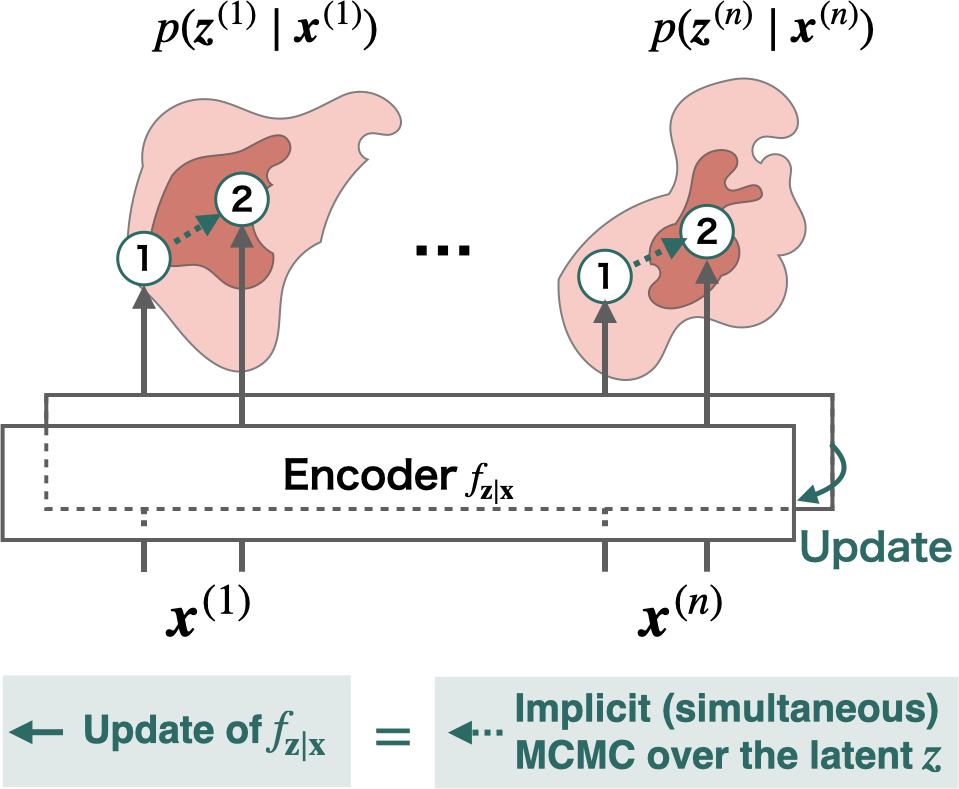}
    \end{minipage}\\
    \begin{minipage}{\hsize}
    \end{minipage}\\
    \begin{minipage}{0.24\hsize}
        \centering
        (a) Variational Inference
    \end{minipage}
    \begin{minipage}{0.24\hsize}
        \centering
        (b) Langevin Dynamics
    \end{minipage}
    \begin{minipage}{0.24\hsize}
        \centering
        (c) \citet{hoffman2017learning}
    \end{minipage}
    \begin{minipage}{0.24\hsize}
        \centering
        (d) ALD (ours)
    \end{minipage}
    \caption{Comparison between existing approximated inference methods and our amortized Langevin dynamics (ALD). (a) In variational inference, posteriors are approximated using tractable distributions (e.g., Gaussians). (b) In traditional Langevin dynamics (LD), the approximation is performed by running gradient-based sampling iterations directly on the latent space for each datapoint. (c) \citet{hoffman2017learning} uses an encoder that maps the observation into the latent variable to initialize traditional LD, but it still relies on datapoint-wise iterations. (d) Our ALD also uses an encoder, but it treats the output of the encoder as a posterior sample, and its update is implicitly performed by updating the encoder.}
\label{fig:graphical_model}
\end{figure}

Compared to VI, MCMC (e.g., Langevin dynamics) can approximate complex distributions by repeating sampling from a Markov chain that has the target distribution as its stationary distribution \citep{liu2001monte,robert2004monte,gilks1995markov,geyer1992practical}.
However, despite its high approximation ability, MCMC has received relatively little attention in learning DLVMs.
It is because MCMC methods take a long time to converge, making it difficult to be used in the training of DLVMs.
When learning DLVMs with MCMC, we need to run time-consuming MCMC iterations for sampling from each posterior per data point, i.e., $p \left( \rvz \mid \vx^{(i)} \right)$ ($i = 1, \ldots, n$), where $n$ is the number of mini-batch data, as shown in Figure \ref{fig:graphical_model} (b).
Furthermore, we need to re-run the sampling procedure when we obtain new data at test time. 

As in VI, there have been some attempts to introduce the concept of amortized inference to MCMC.
For example, \citet{hoffman2017learning} initializes datapoint-wise MCMC using an encoder that predicts latent variables from observations as shown in Figure \ref{fig:graphical_model} (c).
However, as they use encoders only for the initialization of MCMC, these methods still rely on datapoint-wise sampling iterations. 
Not only is it time-consuming, but implementations of such partially amortized methods also tend to be complicated compared to the simplicity of AVI.
To make MCMC more suitable for the inference of DLVMs, a more sophisticated framework of amortization is needed.

This paper proposes \textit{amortized Langevin dynamics} (ALD), which entirely replaces datapoint-wise MCMC iterations with updates of an encoder that maps the observation into the latent variable as shown in Figure \ref{fig:graphical_model} (d). 
Since the latent variable depends on the encoder, the updates of the encoder can be regarded as implicit updates of latent variables.
By replacing MCMC on the latent space with one on the encoder’s parameter space, we can benefit from the encoder’s generalization ability.
For example, after running a Markov chain of the encoder for some mini-batch data, it is expected that the encoder can map data into high density area in the latent space; hence we can accelerate the convergence of MCMC for other data by initializing it with the encoder.
Moreover, despite the simplicity, we can theoretically guarantee that ALD has the true posterior as a stationary distribution under mild assumptions, which is a critical requirement for valid MCMC algorithms.


Using our ALD for sampling from the latent posterior, we derive a novel framework of learning DLVMs, which we refer to as the {\it Langevin autoencoder} (LAE).
Interestingly, the learning algorithm of LAEs can be regarded as a small modification of traditional autoencoders \citep{hinton2006reducing}.
In our experiments, we first show that ALD can properly obtain samples from target distributions using toy datasets.
Subsequently, we perform numerical experiments of the image generation task using the MNIST, SVHN, CIFAR-10, and CelebA datasets.
We demonstrate that our LAE can outperform existing learning methods based on variational inference, such as the VAE, and existing MCMC-based methods in terms of test likelihood.

\section{Preliminaries}
\label{sec:preliminaries}
\subsection{Problem Definition}
\label{subsec:problem_def}
Consider a probabilistic model with the observation $\rvx$, the continuous latent variable $\rvz$, and the model parameter $\vtheta$ as follows:
\begin{align}
    p \left( \rvx ; \vtheta \right) = \int p \left( \rvx \mid \vz ; \vtheta \right) p \left( \vz \right) d\vz.
\end{align}
To learn this latent variable model (LVM) via maximum likelihood in a gradient-based manner, we need to calculate the expectation over the posterior distribution $p \left( \rvz \mid \rvx ; \vtheta \right)$ as follows:
\begin{align}
    &\nabla_\vtheta \mathbb{E}_{\hat{p}_{\mathrm{data}} \left( \rvx \right)} \left[ \log p \left( \vx ; \vtheta \right) \right]
    \approx \frac{1}{n} \sum_{i=1}^n \mathbb{E}_{p \left( \rvz^{(i)} \mid \vx^{(i)} ; \vtheta \right)} \left[ \nabla_\vtheta \log p \left( \vx^{(i)}, \vz^{(i)} ; \vtheta \right) \right], \label{eq:derivative_lvm}
\end{align}
where $\hat{p}_{\mathrm{data}}$ is the empirical distribution define by the training set, and $\vx^{(1)}, \ldots, \vx^{(n)}$ are mini-batch data uniformly drawn from $\hat{p}_{\mathrm{data}}$.
However, this expectation cannot be calculated in a closed-form because the posterior is intractable. 
In this paper, we consider to use Monte Carlo approximation by obtaining samples from the posterior per data point.
    
\subsection{Langevin Dynamics}
\label{subsec:ld}
Langevin dynamics (LD) \citep{neal2011mcmc} is a sampling algorithm based on the following Langevin equation:
\begin{equation}
    d\vz = - \nabla_\vz U \left( \vx, \vz ; \vtheta \right) dt + \sqrt{2 \beta^{-1}} dB, \label{eq:langevin_equation}
\end{equation}
where $U$ is a Lipschitz continuous potential function that satisfies an appropriate growth condition, $\beta$ is an inverse temperature parameter, and $B$ is a Brownian motion.
This stochastic differential equation (SDE) has $p^\beta \left( \vz \mid \vx ; \vtheta \right) \propto \exp \left( - \beta U \left( \vx, \vz ; \vtheta \right) \right)$ as its stationary distribution. 
We set $\beta = 1$ and define the potential as follows to obtain the target posterior $p \left( \rvz \mid \vx ; \vtheta \right)$ as its stationary distribution:
\begin{align}
    U \left( \vx, \vz ; \vtheta \right) 
    &= - \log p \left( \vx, \vz ; \vtheta \right). \label{eq:potential_energy}
\end{align}
We can obtain samples from the posterior by simulating the SDE of Eq. (\ref{eq:langevin_equation}) using the Euler–Maruyama method \citep{kloeden2013numerical} as follows:
\begin{gather}
    \vz_{t+1}  \sim q \left( \vz_{t+1} \mid \vz_t \right) \label{eq:langevin_iteration}, \\
    q \left( \vz^\prime \mid \vz \right) \coloneqq \mathcal{N} \left( \vz^\prime ; \vz - \eta \nabla_{z} U \left( \vx, \vz ; \vtheta \right), 2 \eta \mI \right),
\end{gather}
where $\eta$ is a step size for discretization. 
The initial value $\vz_0$ is typically sampled from the prior $p \left( \vz \right)$.
When the step size is sufficiently small, the samples asymptotically move to the target posterior by repeating this sampling iteration.
To remove the discretization error, an additional Metropolis-Hastings (MH) rejection step is often used.
In a MH step, we first calculate the acceptance rate $\alpha_t$ as follows:
\begin{align}
    \alpha_t = \min \left\{1, \frac{\exp \left( - U \left( \vx, \vz_{t+1} ; \vtheta \right) \right) q \left( \vz_t \mid \vz_{t+1} \right)}{\exp \left( - U \left( \vx, \vz_{t} ; \vtheta \right) \right) q \left( \vz_{t+1} \mid \vz_{t} \right)} \right\}
\end{align}
The sample $\vz_{t+1}$ is accepted with probability $\alpha_t$, and rejected with probability $1 - \alpha_t$.
If the sample is rejected, we set $\vz_{t+1} = \vz_t$.
LD can be applied to any posterior inference problems for continuous latent variables, provided the potential energy is differentiable on the latent space.
To obtain the posterior samples for all mini-batch data $\vx^{(1)}, \ldots \vx^{(n)}$, we should perform iterations of Eq. (\ref{eq:langevin_iteration}) per data point, as shown in Figure \ref{fig:graphical_model} (b).

After obtaining the samples, the gradient in Eq. (\ref{eq:derivative_lvm}) is approximated using the samples as follows:
\begin{align}
    &\nabla_\vtheta \mathbb{E}_{\hat{p}_{\mathrm{data}} \left( \rvx \right)} \left[ \log p \left( \vx ; \vtheta \right) \right]
    \approx \frac{1}{n T} \sum_{i=1}^n \sum_{t=1}^T \nabla_\vtheta \log p \left( \vx^{(i)}, \vz_t^{(i)} ; \vtheta \right), \label{eq:mc_derivative_lvm}
\end{align}
The time averaged gradient in Eq. (\ref{eq:mc_derivative_lvm}) is sometimes substituted for the one calculated with the final sample as follows:
\begin{align}
    &\nabla_\vtheta \mathbb{E}_{\hat{p}_{\mathrm{data}} \left( \rvx \right)} \left[ \log p \left( \vx ; \vtheta \right) \right]
    \approx \frac{1}{n} \sum_{i=1}^n \nabla_\vtheta \log p \left( \vx^{(i)}, \vz_T^{(i)} ; \vtheta \right), \label{eq:single_mc_derivative_lvm}
\end{align}
However, in these traditional approaches, we need to run MCMC iterations from random initialization every time after updating the model parameter $\vtheta$.
In addition, we also need to run it from stratch to perform inference for new data in test time.
This inefficiency hinders the practical use of MCMC for learning DLVMs.
A naive approach to mitigate this problem is to initialize MCMC with a persistent sample \citep{tieleman2008training,han2017alternating} that is the final value of the previous Markov chain.
However, this approach is also inefficient especially when the training set is large, because we need to store persistent samples for all training examples.

To alleviate the inefficiency of LD for LVMs, \citet{hoffman2017learning} has proposed to use a stochastic encoder $q \left( \rvz \mid \rvx \right)$ to initialize the datapoint-wise MCMC as shown in Figure \ref{fig:graphical_model} (c).
The encoder is typically defined as a Gaussian distribution as in the VAE:
\begin{align}
    q \left( \vz \mid \rvx ; \vphi \right) = \mathcal{N} \left( \vz ; \mu \left( \vx ; \vphi \right), \mathrm{diag} \left( \sigma^2 \left( \vx ; \vphi \right) \right) \right), \label{eq:dlgm_encoder}
\end{align}
where $\mu$ and $\sigma^2$ are mappings from the observation space into the latent space.
In the training, LD iterations of Eq. (\ref{eq:langevin_iteration}) are initialized using a sample from the distribution of Eq. (\ref{eq:dlgm_encoder}), and the model parameter is updated using the stochastic gradient in Eq. (\ref{eq:single_mc_derivative_lvm}).
Along with it, the encoder is trained via maximization of the evidence lower bound as in VAEs:
\begin{align}
    \mathcal{L} \left( \vphi \right) = \mathbb{E}_{q \left( \rvz \mid \rvx \right) \hat{p} \left( \rvx \right)} \left[ \log \frac{p \left( \vx, \vz ; \vtheta \right)}{q \left( \rvz \mid \rvx ; \vphi \right)} \right].
\end{align}
Although initializing LD with the encoder can speed up the convergence, this method still relies on datapoint-wise MCMC iterations.
Moreover, the encoder has to be trained using the different objective, which also makes its implementation complicated.
In Section \ref{sec:ald}, we demonstrate a method that entirely remove the datapoint-wise iterations by amortization.

\section{Method}
\label{sec:ald}
    \begin{figure}[t]
        \begin{algorithm}[H]
            \caption{Amortized Langevin dynamics}
            \label{alg:ald_train}
            \begin{algorithmic}
                \State {$\vphi \leftarrow$ Initialize parameters}
                \State {$\sZ^{(1)}, \ldots, \sZ^{(n)} \leftarrow \varnothing$}
                \Comment{Initialize sample sets for all $n$ datapoints.}
                \Repeat 
                \State{$\vphi^\prime \sim q \left( \vphi^\prime \mid \vphi \right) \coloneqq \mathcal{N} \left( \vphi^\prime ; \vphi - \eta \sum_{i=1}^n \nabla_\vphi U \left( \vx^{(i)}, \vz^{(i)} = f_{\rvz \mid \rvx} \left( \vx^{(i)} ; \vphi \right) \right), 2 \eta \mI \right)$}
                \State{$\vphi \leftarrow \vphi^\prime$ with probability $\min \left\{1, \frac{\exp \left( - V \left( \vphi^\prime \right) \right) q \left( \vphi \mid \vphi^\prime \right)}{\exp \left( - V \left( \vphi \right) \right) q \left( \vphi^\prime \mid \vphi \right)} \right\}$.}
                \Comment{MH rejection step.}
                \State{$\sZ^{(1)}, \ldots, \sZ^{(n)} \leftarrow \sZ^{(1)} \cup \left\{f_{\rvz \mid \rvx} \left( \vx^{(1)} ; \vphi \right) \right\}, \ldots, \sZ^{(N)} \cup \left\{f_{\rvz \mid \rvx} \left( \vx^{(n)} ; \vphi \right) \right\}$}
                \Comment{Add samples.}
                \Until{convergence of parameters}
                \State \Return{$\sZ^{(1)}, \ldots, \sZ^{(n)}$}
            \end{algorithmic}
        \end{algorithm}
    \end{figure}
\subsection{Amortized Langevin Dynamics}
\label{subsec:ald_idea}
As an alternative to the direct simulation of latent dynamics of Eq. (\ref{eq:langevin_equation}), we define a deterministic encoder $f_{\rvz \mid \rvx}$, which maps the observation into the latent variable, and consider an SDE over its parameter $\vphi$ as follows:
\begin{gather}
    d \vphi = - \nabla_{\vphi} V \left( \vphi \right) dt + \sqrt{2} dB, \label{eq:ald_equation} \\
    V \left( \vphi \right) \coloneqq \sum_{i=1}^n U \left( \vx^{(i)}, f_{\rvz \mid \rvx} \left( \vx^{(i)} ; \vphi \right) ; \vtheta \right). \label{eq:potential_phi}
\end{gather}
Because the function $f_{\rvz \mid \rvx}$ outputs the latent variable, the stochastic dynamics on the parameter space induces dynamics on the latent space.
The main idea of our amortized Langevin dynamics (ALD) is to regard the transition on this induced dynamics as a sampling procedure for the posterior distributions, as shown in Figure \ref{fig:graphical_model} (d).
We can use the Euler–Maruyama method to simulate Eq. (\ref{eq:ald_equation}) with discretization:
\begin{gather}
    \vphi_{t+1} \sim q \left( \vphi_{t+1} \mid \vphi_t \right), \label{eq:ald_iteration} \\
    q \left( \vphi^\prime \mid \vphi \right) \coloneqq \mathcal{N} \left( \vphi^\prime ; \vphi - \eta \nabla_{\vphi} V \left( \vphi \right), 2 \eta \mI \right).
\end{gather}
As in the traditional LD, the discretization error can be removed by adding a MH rejection step, calculating the acceptance rate as follows:
\begin{gather}
    \alpha_t = \min \left\{1, \frac{\exp \left( - V \left( \vphi_{t+1} \right) \right) q \left( \vphi_t \mid \vphi_{t+1} \right)}{\exp \left( - V \left( \vphi_{t} \right) \right) q \left( \vphi_{t+1} \mid \vphi_{t} \right)} \right\}
\end{gather}

Through the iterations, the posterior sampling is implicitly performed by collecting outputs of the encoder for each data point as described in Algorithm \ref{alg:ald_train}. Note that $\sZ^{(i)}$ denotes a set of samples of the posterior for the $i$-th data (i.e., $p \left( \rvz \mid \vx^{(i)} \right)$) obtained using ALD.

If this implicit update iteration has the posterior as its stationary distribution, this sampling procedure is valid as an MCMC algorithm.
To obtain the stationary distribution over the latent variables, we first derive the stationary distribution over the parameter $\vphi$ of Eq. (\ref{eq:ald_equation}), then transform it into the latent space by considering the change of random variable by the encoder $f_{\rvz \mid \rvx}$.
Based on this strategy, we derive the following theorem:
\begin{thm}\label{thm:convergence}
    Let $q \left( \mZ \mid \mX \right) \coloneqq \prod_{i=1}^n q \left( \vz^{(i)} \mid \vx^{(i)} \right)$ be a stationary distribution over the latent variables that is induced by Eq. (\ref{eq:ald_equation}). When the mapping $f_{\rvz \mid \rvx}$ meets the following conditions, the stationary distribution satisfies $q \left( \mZ \mid \mX \right) \propto \exp \left( - \sum_{i=1}^n U \left( \vx^{(i)}, \vz^{(i)} \right) \right)$.
    \begin{enumerate}
        \item The mapping has the form of $f_{\rvz \mid \rvx} \left( \vx ; \mPhi \right) = \mPhi g \left( \vx \right)$, where $\mPhi$ is a $d_\rvz \times d$ matrix, $g$ is a mapping from $\mathbb{R}^{d_{\rvx}}$ to $\mathbb{R}^{d}$, and $d_\rvx$, $d_\rvz$ and $d$ are the dimensionalities of $\vx$, $\vz$ and $g \left( \vx \right)$ respectively.
        \item The rank of $\mG$ is $n$, where $\mG$ is a matrix with $g \left( \vx^{(i)} \right)$ in row $\mG_{i, :}$.
    \end{enumerate}
\end{thm}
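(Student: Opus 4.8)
The plan is to use the standard correspondence between overdamped Langevin diffusions and Gibbs measures at the level of the parameter $\vphi$, and then transport the resulting measure through the (linear) encoder onto the latent space, using the two hypotheses to control the change of variables. I would organize the argument in three steps.

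\emph{Step 1 (invariant law on $\vphi$ and reduction).} First I recall the textbook fact that the SDE $d\vphi = -\nabla_\vphi V(\vphi)\,dt + \sqrt{2}\,dB$ admits $\pi(\vphi) \propto \exp(-V(\vphi))$ as an invariant density; equivalently $\pi$ annihilates the stationary Fokker--Planck operator, and the discretized kernel together with the MH correction in Algorithm~\ref{alg:ald_train} satisfies detailed balance with respect to $\pi$. Next I exploit Condition~1: writing $\vg^{(i)} \coloneqq g(\vx^{(i)})$ and stacking them as the rows of $\mG$, the encoder output is $\vz^{(i)} = \mPhi \vg^{(i)}$, so $V(\vphi) = \tilde V(\mZ)$ with $\tilde V(\mZ) \coloneqq \sum_{i=1}^n U(\vx^{(i)}, \vz^{(i)})$, where $\mZ \coloneqq (\vz^{(1)}, \dots, \vz^{(n)})$ is a \emph{linear} image of $\vphi = \mathrm{vec}(\mPhi)$. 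Thus the dependence of the potential on the parameters factors through the linear map $T : \mPhi \mapsto (\mPhi \vg^{(i)})_{i=1}^n$.

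\emph{Step 2 (change of variables and factorization).} Decomposing $\mPhi$ into its rows $\vphi_k^\top$ ($k = 1, \dots, d_\rvz$), the $k$-th latent coordinate across the batch is $\mZ_{:,k} = \mG \vphi_k$. Condition~2 asserts $\operatorname{rank} \mG = n$, so $T$ is surjective; in the square case $d = n$ it is a linear bijection with constant Jacobian, and pushing $\pi$ forward immediately yields $q(\mZ \mid \mX) \propto \exp(-\tilde V(\mZ))$. Because $\tilde V$ is a sum of per-datapoint terms, this factorizes as $q(\mZ \mid \mX) = \prod_{i=1}^n q(\vz^{(i)} \mid \vx^{(i)})$ with $q(\vz^{(i)} \mid \vx^{(i)}) \propto \exp(-U(\vx^{(i)}, \vz^{(i)})) = p(\vx^{(i)}, \vz^{(i)}; \vtheta)$, which is precisely the target posterior up to normalization.

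\emph{Step 3 (the main obstacle: $d > n$).} The delicate point, and where I expect the real work to lie, is that when $d > n$ the map $T$ has a nontrivial kernel along which $V$ is constant, so $\exp(-V)$ is not integrable over $\vphi$-space and the invariant law on $\vphi$ is improper; a bare change of variables is then ill-defined. I would circumvent this by deriving the induced dynamics on $\mZ$ directly. Since $T$ is linear, It\^o's formula gives, for each coordinate, $d\mZ_{:,k} = -\mG\mG^\top \nabla_{\mZ_{:,k}} \tilde V\,dt + \sqrt{2}\,\mG\,dB_k$, a preconditioned Langevin diffusion whose drift depends on $\mZ$ alone and whose diffusion matrix $\mG\mG^\top$ is constant. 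Condition~2 is exactly what makes $\mG\mG^\top \succ 0$, so this diffusion is non-degenerate on the latent space, and the standard Fokker--Planck check for $dX = -D\nabla\tilde V\,dt + \sqrt{2}\,\sigma\,dB$ with $\sigma\sigma^\top = D$ constant then confirms the invariant density is $\propto \exp(-\tilde V(\mZ))$, irrespective of the kernel directions in $\vphi$. The factorization of Step~2 applies verbatim. In short, Condition~1 linearizes the encoder so that the induced diffusion has a constant coefficient, while Condition~2 guarantees its ellipticity; together they pin down the stationary law as the product of target posteriors.
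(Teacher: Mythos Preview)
Your proposal is correct and takes a genuinely different route from the paper. The paper's proof works entirely on the parameter side: it uses the SVD of $\mG$ to build an orthogonal change of coordinates $\tau$ on $\mPhi$-space that splits the dynamics into (i) a proper Langevin diffusion on an $n d_\rvz$-dimensional block $\tilde{\mPhi}^{(1)}$ whose image under a linear isomorphism $\tilde h^{(1)}$ is exactly $\mZ$, and (ii) pure Brownian motion on the complementary kernel directions $\tilde{\mPhi}^{(2)}$; it then pushes forward the (proper) Gibbs law on $\tilde{\mPhi}^{(1)}$ through $\tilde h^{(1)}$, the constant Jacobian of the linear isomorphism being absorbed into the normalizer. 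You instead bypass the parameter space in the degenerate case $d>n$: applying It\^o to the linear map $T$ directly yields a preconditioned Langevin diffusion on $\mZ$ with constant, block-diagonal diffusion $\mI_{d_\rvz}\otimes \mG\mG^\top$, and Condition~2 is precisely the ellipticity needed to read off $\exp(-\tilde V)$ as the stationary density via Fokker--Planck. Your argument is shorter and makes the role of $\mG\mG^\top$ as a preconditioner transparent; the paper's orthogonal decomposition is more explicit about what happens along the null directions of $T$ (they carry an improper flat factor that simply marginalizes out), which is conceptually informative but not logically required once one works at the level of the induced $\mZ$-dynamics as you do.
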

See the appendix for the solid proof.
Theorem \ref{thm:convergence} suggests that samples obtained by ALD asymptotically converge to the true posterior when we construct the encoder $f_{\rvz \mid \rvx}$ with an appropriate form.
Practically, we can implement such a function using a neural network whose parameters are fixed except for the last linear layer.
In this implementation, the last linear layer takes a role of the parameter $\mPhi$, and the preceding feature extractor takes a role of the function $g$.

To meet the second condition, the dimensionality of the last linear layer should be larger than the batch size $n$.
It is relatively easy to meet this condition because the batch size is about 1,000 at most in practice.

\subsection{Remarks}
{\bf Remark 1: ALD completely removes datapoint-wise iterations}.

Because ALD treats the encoder's outputs themselves as posterior samples, we no longer need to run time-consuming iterations of datapoint-wise MCMC, whereas existing methods, such as \citet{hoffman2017learning} only use the encoder for initialization of datapoint-wise MCMC.

{\bf Remark 2: ALD is valid as an MCMC algorithm}.

Although the sampling procedure of ALD is quite simple, ALD has the true posterior as its stationary distribution under mild assumptions, which guarantee that ALD is valid as an MCMC algorithm.
Basically, we can meet the assumptions by using a sufficiently wide neural network for the encoder.
In addition, it is worth mentioning that the traditional LD can be seen as a special case of ALD where $g \left( \vx^{(i)} \right) = \textrm{one-hot} \left( i \right)$.
In that case, $\vz^{(i)} = f_{\rvz \mid \rvx} \left( \vx^{(i)} ; \mPhi \right)$ corresponds to the $i$-th column of $\mPhi$, which is equivalent to running MCMC on the latent space independently for each data point.

{\bf Remark 3: The encoder may accelerate the convergence of MCMC}.

After running ALD iterations of the encoder for some mini-batch data, it is expected that the encoder can map data into high density area in the latent space.
Therefore, the encoder can accelerate the convergence of MCMC for other data by initializing it with the encoder.
This characteristics is useful not only in the training of DLVMs to efficiently estimate the gradient of the model parameters but also at test time to infer the latent representation for new test data.



\subsection{Langevin Autoencoder}
\label{subsec:lae}

    \begin{figure}[t]
        \begin{algorithm}[H]
            \caption{Langevin Autoencoders}
            \label{alg:lae}
            \begin{algorithmic}[1]
                \State {$\vtheta, \mPhi, \vpsi \leftarrow$ Initialize parameters}
                \Repeat 
                \State{$\vx^{(1)} , \ldots, \vx^{(n)} \sim \hat{p} \left( \rvx \right)$}
                \Comment{Sample a minibatch of $n$ examples from the training data.}
                \For{$t = 0, \ldots, T-1$}
                \Comment{Run ALD iterations.}
                \State{$V_t = - \sum_{i=1}^n \log p \left( \vx^{(i)}, \vz^{(i)} = \mPhi g \left( \vx^{(i)} ; \vpsi \right) ; \vtheta \right)$}
                \State{$\mPhi^\prime \sim q \left( \mPhi^\prime \mid \mPhi \right) \coloneqq \mathcal{N} \left( \mPhi^\prime; \mPhi - \eta \nabla_{\mPhi} V_t, 2 \eta \mI \right)$}
                \State{$V^\prime_t = - \sum_{i=1}^n \log p \left( \vx^{(i)}, \vz^{(i)} = \mPhi^\prime g \left( \vx^{(i)} ; \vpsi \right) ; \vtheta \right)$}
                \State{$\mPhi \leftarrow \mPhi^\prime$ with probability $\min \left\{1, \frac{\exp \left( - V^\prime_t \right) q \left( \mPhi \mid \mPhi^\prime \right)}{\exp \left( - V_t \right) q \left( \mPhi^\prime \mid \mPhi \right)} \right\}$.}
                \Comment{MH rejection step.}
                \EndFor
                \State{$V_T = - \sum_{i=1}^n \log p \left( \vx^{(i)}, \vz^{(i)} = \mPhi g \left( \vx^{(i)} ; \vpsi \right) ; \vtheta \right)$}
                \State{$\vtheta \leftarrow \vtheta - \eta \nabla_\vtheta \frac{1}{T} \sum_{t=1}^T V_t$}
                \Comment{Update the decoder.}
                \State{$\vpsi \leftarrow \vpsi - \eta \nabla_\vpsi \frac{1}{T} \sum_{t=1}^T V_t$}
                \Comment{Update the encoder.}
                \Until{convergence of parameters}
                \State \Return{$\vtheta, \mPhi, \vpsi$}
            \end{algorithmic}
        \end{algorithm}
    \end{figure}
    
Based on ALD, we propose a novel framework of learning deep latent variable models called the {\it Langevin autoencoder} (LAE).
Algorithm \ref{alg:lae} is a summary of the LAE's training procedure.
First, we prepare an encoder defined by $f_{\rvz \mid \rvx} \left( \vx ; \mPhi, \vpsi \right) \coloneqq \mPhi g \left( \vx ; \vpsi \right)$.
The feature extractor $g$ is typically implemented using a deep neural network.
Before each update of the model parameter $\vtheta$, the encoder's final linear layer $\mPhi$ is updated using ALD for $T$ times, and the gradient of $\vtheta$ in Eq. (\ref{eq:mc_derivative_lvm}) is calculated using the time average of $V_t$.
Along with the update of the model parameter, the encoder's feature extractor $\vpsi$ is also updated using the gradient so that the encoder can map data into high density area of the posteriors.
Although we describe the parameter update using a simple stochastic gradient ascent, it can be substituted for more sophisticated optimization methods, such as Adam \citep{dp2015adam}.

It can be seen that the algorithm is very similar to the one for traditional deterministic autoencoders \citep{hinton2006reducing}.
In particular, when we skip the ALD iterations in lines 4 to 9 and the encoder's final linear layer $\mPhi$ is also trained via gradient ascent, the algorithm is identical to the training of autoencoders regularized by the latent prior $p \left( \vz \right)$.
In that case, the encoder tends to shrink to the maximum a posteriori (MAP) estimate rather than the posterior; hence the gradient of the model parameter $\vtheta$ would be a strongly biased estimate of the marginal log-likelihood gradient.
Therefore, the additional ALD iterations can be interpreted as the modification to reduce the bias of the traditional autoencoder by updating the encoder's parameters along with the noise-injected gradient.

\section{Related Works}
\label{sec:related_work}
    \textbf{Amortized inference}
        is well-investigated in the context of variational inference. 
        It is often referred to as {\it amortized variational inference}  (AVI) \citep{rezende2015variational,shu2018amortized}. 
        The basic idea of AVI is to replace the optimization of the datapoint-wise variational parameters with the optimization of shared parameters across all datapoints by introducing an encoder that predicts latent variables from observations. 
        The AVI is commonly used in generative models \citep{kingma2013auto}, semi-supervised learning \citep{kingma2014semi}, anomaly detection \citep{an2015variational}, machine translation \citep{zhang2016variational}, and neural rendering \citep{eslami2018neural,kumar2018consistent}.
        However, in the MCMC literature, there are few works on such amortization. \citet{han2017alternating} use traditional LD to obtain samples from posteriors for the training of deep latent variable models. 
        Such Langevin-based algorithms for deep latent variable models are known as {\it alternating back-propagation} (ABP) and are widely applied in several fields \citep{xie2019learning,zhang2020learning,xing2018deformable,zhu2019learning}. 
        However, ABP requires datapoint-wise Langevin iterations, causing slow convergence. 
        Moreover, when we perform inference for new data in test time, ABP requires MCMC iterations from randomly initialized samples again. 
        \citet{li2017approximate} and \citet{hoffman2017learning} propose to use a VAE-like encoder to initialize MCMC, and \citet{salimans2015markov} also proposes to combine VAE-based inference and MCMC by interpreting each MCMC step as an auxiliary variable.
        However, they only amortize the initialization cost in MCMC by using an encoder; hence, they still rely on datapoint-wise MCMC iterations. 
        

    \textbf{Autoencoders}
        (AEs) \citep{hinton2006reducing} are a special case of LAEs, wherein the ALD iterations are omitted and a uniform distribution is used as the latent prior $p \left( \vz \right)$. When a different distribution is used as the latent prior as regularization, it is known as sparse autoencoders (SAEs) \citep{ng2011sparse}.
        As described in the previous section, the encoder of the traditional AE tends to converge to MAP estimates of the latent posterior. Therefore, the gradient of the decoder's parameter is biased as the gradient estimate of the marginal log-likelihood.
        Our LAE can modify this bias by adding the ALD iterations before each parameter update, making it valid as training of a generative model.
        
    \textbf{Variational Autoencoders}
        (VAEs) are based on AVI, wherein an encoder is defined as a variational distribution $q \left( \rvz \mid \rvx ; \vphi \right)$ using a neural network. 
        Its parameter $\vphi$ is optimized by maximizing the evidence lower bound (ELBO), i.e., $\mathbb{E}_{q \left( \rvz \mid \vx ; \vphi \right)} \left[ \log \frac {\exp \left( - U \left( \vx, \vz \right) \right)} {q \left( \vz \mid \vx ; \vphi \right)} \right]$. 
        Interestingly, there is a contrast between VAEs and LAEs relative to when stochastic noise is used in posterior inference.
        In VAEs, noise is used to sample from the variational distribution in calculating the potential $U$, i.e., in the {\it forward} calculation. However, in LAEs, noise is used for the parameter update along with the gradient calculation $\nabla_\phi U$, i.e., in the {\it backward} calculation. 
        This contrast characterizes their two different approaches to approximate posteriors: the optimization-based approach of VAEs and the sampling-based approach of LAEs.
        The advantage of LAEs over VAEs is that LAEs can flexibly approximate complex posteriors by obtaining samples, whereas VAEs' approximation ability is limited by choice of variational distribution $q \left( \rvz \mid \rvx ; \vphi \right)$ because it requires a tractable density. 
        Although there are several considerations in the improvement of the approximation flexibility, these methods typically have architectural constraints (e.g., invertibility and ease of Jacobian calculation in normalizing flows \citep{rezende2015variational,kingma2016improved,van2018sylvester,huang2018neural,titsias2019unbiased}), or they incur more computational costs (e.g., MCMC sampling for the reverse conditional distribution in unbiased implicit variational inference \citep{titsias2019unbiased}). 
        
\section{Experiment}
\label{sec:experiment}
In our experiment, we first test our ALD algorithm on toy examples to investigate its behavior, then we show the results of its application to the training of deep generative models for image datasets.
\subsection{Toy Examples}
\label{sec:toy_example}

 \begin{figure}[tb]
        \centering
        \begin{minipage}{0.24\hsize}
            \includegraphics[width=\hsize]{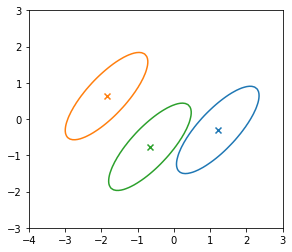}
        \end{minipage}
        \begin{minipage}{0.24\hsize}
            \includegraphics[width=\hsize]{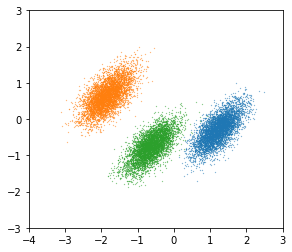}
        \end{minipage}
        \begin{minipage}{0.24\hsize}
            \includegraphics[width=\hsize]{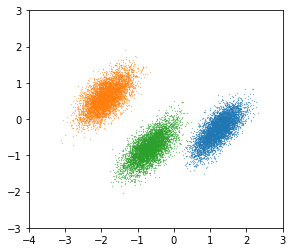}
        \end{minipage}
        \begin{minipage}{0.24\hsize}
            \includegraphics[width=\hsize]{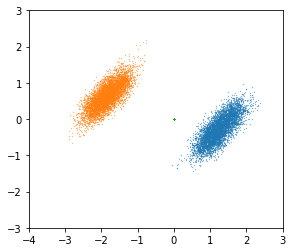}
        \end{minipage}\\
        \begin{minipage}{0.24\hsize}
            \centering
            GT
        \end{minipage}
        \begin{minipage}{0.24\hsize}
            \centering
            $d > n$
        \end{minipage}
        \begin{minipage}{0.24\hsize}
            \centering
            $d = n$
        \end{minipage}
        \begin{minipage}{0.24\hsize}
            \centering
            $d < n$
        \end{minipage}
        \caption{Effects of changing the encoder's capacity in the experiment of bivariate Gaussian examples. $d$ denotes the dimensionality of the last linear layer's input.}
        \label{fig:amortization_gap}
    \end{figure}
    
We perform numerical simulation using toy examples to demonstrate that our ALD can properly obtain samples from target distributions.
First, we use a LVM where the posterior density can be derived in a closed-form.
We initially generate three synthetic data $\vx_1, \vx_2, \vx_3$, where each $\vx_i$ is sampled from a bivariate Gaussian distribution as follows:
\begin{gather}
    p \left( \vz \right) = \mathcal{N} \left( \vmu_\rvz, \mSigma_\rvz \right), \quad
    p \left( \vx \mid \vz \right) = \mathcal{N} \left( \vz, \mSigma_\rvx  \right).
\end{gather}
In this case, we can calculate the exact posterior as follows:
\begin{gather}
    p \left( \vz \mid \vx \right)
    = \mathcal{N} \left( \vmu_{\rvz \mid \rvx}, \mSigma_{\rvz \mid \rvx} \right),\\
    \vmu_{\rvz \mid \rvx} = \mSigma_{\rvz \mid \rvx} \left( \mSigma_\rvz^{-1} \vmu_\rvz + \mSigma_\rvx^{-1} \vx \right), \ 
    \mSigma_{\rvz \mid \rvx} = \left(\mSigma_\rvz^{-1} + \mSigma_\rvx^{-1} \right)^{-1}.
\end{gather}
In this experiment, we set $\vmu_\rvz = \left[ \begin{array}{c}
                     0  \\
                     0 
                \end{array} \right]$, $\mSigma_\rvz = \left[ \begin{array}{cc}
                     1 & 0 \\
                     0 & 1 
                \end{array} \right]$, and $\mSigma_\rvx = \left[ \begin{array}{cc}
                     0.7 & 0.6 \\
                     0.7 & 0.8 
                \end{array} \right]$.
We simulate our ALD algorithm for this setting to obtain samples from the posterior. 
We use a neural network of three fully connected layers of 128 units with ReLU activation for the encoder $f_{\rvz \mid \rvx}$; setting the step size to $4 \times 10^{-4}$, and update the parameters for 3,000 steps. We omit the first 1,000 samples as burn-in steps and use the remaining 2,000 samples for qualitative evaluation.
As we described in Section \ref{subsec:ald_idea}, we need to make the last linear layer of the encoder have sufficiently large dimensions to guarantee the convergence to the true posterior.
To empirically demonstrate this theoretical finding, we change the dimensionality of the last linear layer of the encoder from $2 \ (< n)$ to $128 \ (\gg n)$.
The results are summarized in Figure \ref{fig:amortization_gap}.
It can be observed that the sample quality is good when the dimensionality of the last linear layer is equal to or greater than the number of data points (i.e., $d \geq n$).
When the dimensionality is smaller than the number of data points, the samples for some data points shrink to a small area, while good samples are obtained for the remaining data points.


\begin{figure}[tb]
    \centering
    \begin{minipage}{0.2\hsize}
        \centering
        {\setlength{\fboxsep}{0pt}\fbox{\includegraphics[width=0.9\hsize]{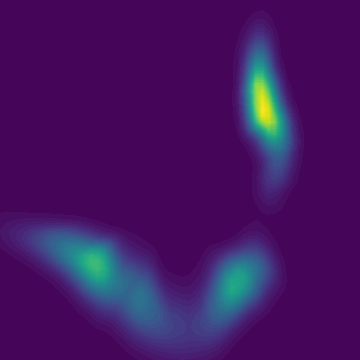}}}
        GT
    \end{minipage}
    \begin{minipage}{0.2\hsize}
        \centering
        {\setlength{\fboxsep}{0pt}\fbox{\includegraphics[width=0.9\hsize]{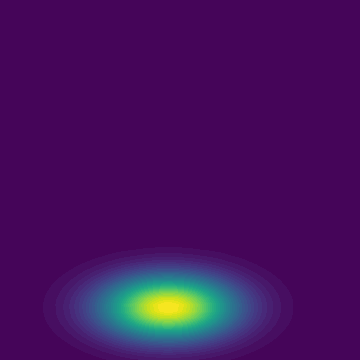}}}
        Mean-field VI
    \end{minipage}
    \begin{minipage}{0.2\hsize}
        \centering
        {\setlength{\fboxsep}{0pt}\fbox{\includegraphics[width=0.9\hsize]{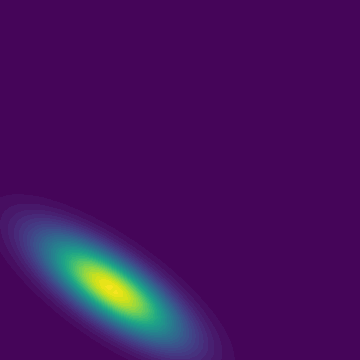}}}
        Full VI
    \end{minipage}
    \begin{minipage}{0.2\hsize}
        \centering
        {\setlength{\fboxsep}{0pt}\fbox{\includegraphics[width=0.9\hsize]{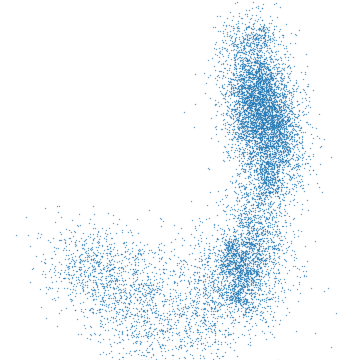}}}
        ALD (ours)
    \end{minipage}
    \caption{Visualizations of a ground truth posterior (left), an approximation by VI (center), and samples by ALD (right) in the neural likelihood example.}
    \label{fig:neural_example}
\end{figure}

In addition to the simple conjugate Gaussian example, we experiment with a complex posterior, wherein the likelihood is defined with a randomly initialized neural network. 
For comparison, we also implement the variational inference (VI), in which the posterior is approximated with a Gaussian distribution.
Figure \ref{fig:neural_example} shows a typical example, which characterizes the difference between VI and ALD.
The mean-filed VI and the full VI use Gaussians with diagonal and full covariance matrices for variational distributions, respectively.
The advantage of our ALD over VI is the flexibility of posterior approximation.
VI methods typically approximate posteriors using variational distributions, which have tractable density functions. 
Hence, their approximation power is limited by the choice of variational distribution family, and they often fail to approximate such complex posteriors.
In particular, the mean-field VI, which is widely used for learning DLVMs, cannot capture the correlation between dimensions due to the constraint of the variational distribution.
The full VI mitigate the inflexibility, but it still cannot capture the multimodality of the true posterior.
Moreover, the full VI requires computational costs proportional to the square of the dimension of the latent variable.
On the other hand, ALD can capture such posteriors well.
The results in other examples are summarized in the appendix.

\subsection{Image Generation}
\label{subsec:image_generation}

To demonstrate the applicability of our LAE to the generative model training, we experiment on image generation tasks using MNIST, SVHN, CIFAR10, and CelebA datasets.
Note that our goal here is not to provide the state-of-the-art results on image generation benchmarks but to verify the effectiveness of our ALD as a method of approximate inference in deep latent variable models.
For this aim, we compare our LAE with some baseline methods, as shown in Table \ref{tab:quantitative_result}.
The VAE \citep{kingma2013auto} is one of the most popular deep latent variable models, in which the posterior distribution is approximated using the VI.
The VAE-flow is an extension of VAE, in which the flexibility of VI is improved using normalizing flows \citep{rezende2015variational}.
In addition to VI-based methods, we use \citet{hoffman2017learning} as a baseline method based on Langevin dynamics (LD).
As described in Section \ref{subsec:ld}, \citet{hoffman2017learning} uses a VAE-like encoder to initialize LD, and the encoder is trained by maximizing the evidence lower bound.
We use the same fully connected deep neural networks for the model construction of all methods.
We set the number of ALD iterations of the LAE to $2$, i.e, $T = 2$ in Algorithm \ref{alg:lae}.
Please refer to the appendix for more implementation details.

For evaluation, since the marginal log-likelihood for test data is not tractable, we substitute its evidence lower bound (ELBO) for it using a proposal distribution $q$ as follows:
\begin{align}
    \log p \left( \vx ; \vtheta \right) \geq \mathbb{E}_{q \left( \rvz \right)} \left[ \log \frac{p \left( \vx, \vz ; \vtheta \right)}{q \left( \vz \right)} \right]
\end{align}


\begin{table}[t]
    \centering
    \caption{Quantitative results of the image generation for MNIST, SVHN, CIFAR-10, and CelebA. We report the mean and standard deviation of the negative evidence lower bound per data dimension in three different seeds. Lower is better.}
    \begin{tabular}{ccccc}
        \toprule
                                    &MNIST                      &SVHN                       &CIFAR-10                   &CelebA \\ \midrule
        VAE                         &$1.189 \pm 0.002$          &$4.442 \pm 0.003$          &$4.820 \pm 0.005$          &$4.671 \pm 0.001$ \\
        VAE-flow                    &$1.183 \pm 0.001$          &$4.454 \pm 0.016$          &$4.828 \pm 0.005$          &$4.667 \pm 0.005$ \\
        \citet{hoffman2017learning} &$1.189 \pm 0.002$          &$4.440 \pm 0.007$          &$4.831 \pm 0.005$          &$4.662 \pm 0.011$ \\ 
        LAE (ours)                  &$\mathbf{1.177} \pm 0.001$ &$\mathbf{4.412} \pm 0.002$ &$\mathbf{4.773} \pm 0.003$ &$\mathbf{4.636} \pm 0.003$ \\\bottomrule
    \end{tabular}
    \label{tab:quantitative_result}
\end{table}

For the baseline methods, their stochastic encoders are used for the proposal distribution, i.e., $q \left( \vz \right) \coloneqq q \left( \vz \mid \vx ; \vphi \right)$.
For our LAE, we use a Gaussian distribution whose mean parameter is defined by its encoder, i.e., $q \left( \vz \right) \coloneqq \mathcal{N} \left( \vz ; \mPhi g \left( \vx ; \vpsi \right), \ \sigma^2 \mI \right)$.
We set $\sigma = 0.05$ in the experiment.


\begin{wrapfigure}[12]{r}{0.45\hsize}
    \vspace{-\baselineskip}
    \centering
    \includegraphics[width=\hsize]{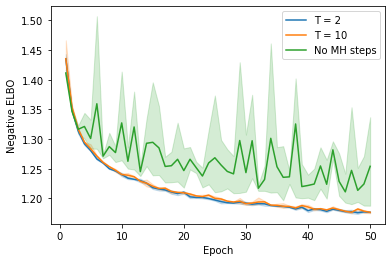}
    \vspace{-\baselineskip}
    \vspace{-\baselineskip}
    \caption{Learning curves comparison.}
    \label{fig:learning_curve}
\end{wrapfigure}  
The results are summarized in Table \ref{tab:quantitative_result}.
Note that the negative ELBO is shown in the table, so lower values indicate better results.
It can be observed that the LAE consistently outperforms baseline methods, demonstrating that accurate posterior approximation by ALD leads to better results in the training of DLVMs.
On training speed, we observe that our LAE is 2.24 and 1.88 times slower than VAE and VAE-flow respectively.
This is natural because VAE and VAE-flow do not require MCMC steps.
\citet{hoffman2017learning} and our LAE are almost identical in terms of training speed.

We also investigate the effect of the MH rejection step and the number of ALD iterations, i.e., $T$ in Algorithm \ref{alg:lae}, using MNIST dataset.
Figure \ref{fig:learning_curve} shows a comparison of the learning curves of the negative ELBO for MNIST's test set.
It can be seen that the number of ALD iterations has little effect on performance as long as the number of steps is at least 2.
In addition, we observe that the MH rejection step is important to stabilize the training.


\section{Conclusion}
\label{sec:conclusion}
This paper proposed amortized Langevin dynamics (ALD), an efficient MCMC method for deep latent variable models (DLVMs). 
ALD amortizes the cost of datapoint-wise iterations by using an encoder that predict the latent variable from the observation. 
We showed that our ALD algorithm could accurately approximate posteriors with both theoretical and empirical studies. 
Using ALD, we derived a novel scheme of deep generative models called the {\it Langevin autoencoder} (LAE).
We demonstrated that our LAE performs better than VI-based methods, such as the variational autoencoder, and existing LD-based methods in terms of the marginal log-likelihood for test sets.
    
This study will be the first step to further work on encoder-based MCMC methods for latent variable models.
For instance, developing algorithms based on more sophisticated Hamiltonian Monte Carlo methods is an exciting direction of future work.

One of the limitations of MCMC-based learning algorithms is the difficulty in choosing the number of MCMC iterations.
To reduce the bias of the gradient estimate, we need to run the iterations for many times, but there are few clues as to how many MCMC iterations are sufficient in advance.
Recently, a method to remove the bias of MCMC with couplings is proposed by \citet{jacob2020unbiased}, and it may help to overcome this limitation of MCMC-based learning algorithm in the future work.
Another limitation of our LAE is that there is a constraint on the structure of the encoder as described in Theorem \ref{thm:convergence}.
Although the constraint is relatively minor, it may be problematic when applying our method to modern DLVMs that have a hierarchical structure in the latent variables (e.g., \citet{vahdat2020nvae} and \citet{child2020very}).

From a broader perspective, developing deep generative models that can synthesize realistic images could cause a negative impact, such as abuse of Deepfake technology. We must consider the negative aspects and take measures for them.

\begin{ack}
This work was supported by JSPS KAKENHI Grant Number JP21J22342 and the Mohammed bin Salman Center for Future Science and Technology for Saudi-Japan Vision 2030 at The University of Tokyo (MbSC2030).
Computational resources of AI Bridging Cloud Infrastructure (ABCI) provided by National Institute of Advanced Industrial Science and Technology (AIST) were used for the experiments.

\end{ack}

\bibliography{neurips_2022}
\bibliographystyle{unsrtnat}

\newpage

\appendix

\section{Proof of Theorem \ref{thm:convergence}}
\label{sec:proof}

First, we prepare some lemmas.

\begin{lem}
    Let $h: \mathbb{R}^{d_\rvz \times d} \to \mathbb{R}^{d_\rvz \times n}$ be a linear map defined by $h \left( \mPhi \right) \coloneqq \mPhi \mG$.
    When the rank of $\mG$ is $n$,
    there exists an orthogonal linear map $\tau: \mathbb{R}^{d_\rvz \times d}\to \mathbb{R}^{d_\rvz \times d}$ such that $\tau \left( \mPhi \right) 
    =[{\tilde{\mPhi}}^{(1)}, {\tilde{\mPhi}}^{(2)}]$ satisfies $\mathrm{ker} \ \tilde{h} 
        = \mathrm{span} \left[\vzero_{d_\rvz \times n}, {\tilde{\mPhi}}^{(2)} \right]$,
    where $\tilde{h} \coloneqq h \circ \tau^{-1}$, ${\tilde{\mPhi}}^{(1)} \coloneqq \left[ \tilde{\vphi}_1,\ldots,\tilde{\vphi} _n \right]$, ${\tilde{\mPhi}}^{(2)} \coloneqq \left[ \tilde{\vphi} _{n+1},\ldots \tilde{\vphi} _d \right]$ and $\tilde{\vphi}_i \in \mathbb{R}^{d_\rvz}$ for $i=1,\ldots,d$.
\end{lem}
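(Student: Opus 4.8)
The plan is to prove this as a statement of finite-dimensional linear algebra: I must exhibit an orthogonal change of coordinates on the space of $d_\rvz \times d$ matrices (equipped with the Frobenius inner product) that straightens $\ker h$ into a coordinate subspace. The essential input is the hypothesis $\mathrm{rank}\,\mG = n$, which I read as $\mG$ having full column rank $n$ (so that $h(\mPhi) = \mPhi\mG$ is surjective onto $\mathbb{R}^{d_\rvz \times n}$ and $\ker h$ has dimension $d_\rvz(d-n)$). That $\tau$ be \emph{orthogonal}, not merely invertible, is the feature that really matters, since it is what will later let the main proof push the Langevin stationary density through the change of variables without an extra Jacobian factor.

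First I would record that right multiplication by any orthogonal matrix is an orthogonal transformation of $(\mathbb{R}^{d_\rvz \times d}, \langle\cdot,\cdot\rangle_F)$: for $U \in \mathbb{R}^{d\times d}$ with $U U^\top = \mI$, one has $\langle \mPhi U, \mPsi U\rangle_F = \Tr\!\left(U^\top \mPhi^\top \mPsi U\right) = \Tr\!\left(\mPhi^\top \mPsi\right) = \langle \mPhi, \mPsi\rangle_F$. Hence any $\tau$ of the form $\tau(\mPhi) = \mPhi U$ with $U$ orthogonal is an admissible orthogonal map, with inverse $\tau^{-1}(\tilde{\mPhi}) = \tilde{\mPhi} U^\top$.

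Next I would produce the right $U$ from an orthogonal decomposition of $\mG$. Taking a full QR decomposition $\mG = U R$ with $U \in \mathbb{R}^{d\times d}$ orthogonal and $R$ upper triangular (a singular value decomposition would serve equally well), the full-rank hypothesis forces $U^\top \mG = R$ to have its last $d-n$ rows identically zero and its leading $n\times n$ block invertible. Defining $\tau(\mPhi) = \mPhi U$ and $\tilde h = h\circ \tau^{-1}$, I compute $\tilde h(\tilde{\mPhi}) = \tilde{\mPhi} U^\top \mG = \tilde{\mPhi} R$. Partitioning $\tilde{\mPhi} = [\tilde{\mPhi}^{(1)}, \tilde{\mPhi}^{(2)}]$ into its first $n$ and last $d-n$ columns and using that only the first $n$ rows of $R$ are nonzero, this product equals $\tilde{\mPhi}^{(1)} R_{1:n,:}$, where $R_{1:n,:}$ is the invertible leading block. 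Therefore $\tilde h(\tilde{\mPhi}) = \vzero$ if and only if $\tilde{\mPhi}^{(1)} = \vzero$, which is exactly the claim $\ker \tilde h = \mathrm{span}[\vzero_{d_\rvz \times n}, \tilde{\mPhi}^{(2)}]$.

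I do not expect a genuine obstacle; the only points demanding care are bookkeeping ones — keeping the orientation of $\mG$ consistent so that $\mPhi\mG$ lands in $\mathbb{R}^{d_\rvz \times n}$, and confirming that the chosen decomposition isolates an invertible $n\times n$ block sitting above a zero block. The conceptually important step is the first one: insisting that the coordinate change be realized by right multiplication with an orthogonal factor rather than an arbitrary invertible one, so that $\tau$ preserves both the Frobenius geometry and Lebesgue measure, which is precisely what the subsequent change-of-variables argument in the main proof relies upon.
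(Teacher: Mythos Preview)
Your argument is correct and follows essentially the same route as the paper: define $\tau(\mPhi)=\mPhi U$ for an orthogonal $U$ chosen so that $U^\top\mG$ has its last $d-n$ rows zero and an invertible $n\times n$ leading block, whence $\ker\tilde h$ is exactly the subspace $\{\tilde{\mPhi}^{(1)}=\vzero\}$. The only cosmetic difference is that the paper produces $U$ (their $\mT^\top$) from the SVD of $\mG$ rather than a full QR factorization, a substitution you already flag as equivalent.
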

\begin{proof}
    The singular value decomposition of $\mG$ is represented by $\mT \mG \tilde{\mT}^{\top}= \left[
    \begin{array}{c}
         \mLambda  \\
         \vzero_{(d-n\times n)} 
    \end{array}
    \right] $, where $\mLambda$ is a $n\times n$ diagonal matrix $\mLambda = \mathrm{diag} \left(\lambda_1,\ldots,\lambda_n \right)$, and $\mT$ and $\tilde{\mT}$ are orthogonal matrices.
    Since the rank of $\mG$ is $n$,
    $\lambda_1,\ldots,\lambda_n$ are non-zero.
    When we set $\tau \left(\mPhi \right) \coloneqq \mPhi \mT^{\top}$,
    we obtain
    \begin{align}
        \tilde{h} \left(\tilde{\mPhi} \right)
        &\coloneqq h \left(\tau^{-1} \left( \tilde{\mPhi} \right) \right) \nonumber\\
        &= \tilde{\mPhi} \mT \mG
        = \tilde{\mPhi} \left( \mT \mG \tilde{\mT}^{\top} \right) \tilde{\mT} \nonumber\\
        &= \tilde{\mPhi} 
        \left[
        \begin{array}{c}
            \mLambda  \\
            \vzero_{\left(d-n\times n \right)} 
        \end{array}
        \right]
        \tilde{\mT}. \label{eq:g-prime}
    \end{align}
    From the above equation,
    $\mathrm{ker} \ \tilde{h} = \mathrm{span} \left[\vzero_{d'\times n}, {\tilde{\mPhi}}^{(2)} \right]$ holds.
\end{proof}

\begin{lem}\label{lem:g-prime}
    For $\tilde{\mPhi} \in \mathbb{R}^{d_\rvz \times d}$, let $\mPhi$ satisfy:
    \begin{equation*}
        \tilde{\mPhi} = \left[{\tilde{\mPhi}}^{(1)},{\tilde{\mPhi}}^{(2)} \right] = \tau \left(\mPhi\right).
    \end{equation*}
    A map ${\tilde{h}}^{(1)}:\mathbb{R}^{d_\rvz \times n} \to \mathbb{R}^{d_\rvz \times n}$ defined by ${\tilde{h}}^{(1)} \left( {\tilde{\mPhi}}^{(1)} \right) \coloneqq \tilde{h} \left( \left[ {\tilde{\mPhi}}^{(1)},\vzero \right] \right)$ satisfies $\mPhi \mG = {\tilde{h}}^{(1)} \left( {\tilde{\mPhi}}^{(1)} \right)$ and is linear isomorphic.
\end{lem}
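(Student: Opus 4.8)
The plan is to push both claims through the explicit diagonalized expression for $\tilde{h}$ obtained in the preceding lemma, after which everything reduces to elementary linear algebra with invertible factors. First I would start from Eq.~(\ref{eq:g-prime}), which gives $\tilde{h}(\tilde{\mPhi}) = \tilde{\mPhi} \left[ \begin{array}{c} \mLambda \\ \vzero_{(d-n) \times n} \end{array} \right] \tilde{\mT}$, and carry out the block multiplication against the partition $\tilde{\mPhi} = [{\tilde{\mPhi}}^{(1)}, {\tilde{\mPhi}}^{(2)}]$ that matches the block structure of the middle factor. Because the lower block is zero, the ${\tilde{\mPhi}}^{(2)}$ part is annihilated and we obtain $\tilde{h}(\tilde{\mPhi}) = {\tilde{\mPhi}}^{(1)} \mLambda \tilde{\mT}$. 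In particular $\tilde{h}(\tilde{\mPhi})$ depends on $\tilde{\mPhi}$ only through ${\tilde{\mPhi}}^{(1)}$, so $\tilde{h}(\tilde{\mPhi}) = \tilde{h}([{\tilde{\mPhi}}^{(1)}, \vzero]) = {\tilde{h}}^{(1)}({\tilde{\mPhi}}^{(1)})$, which hands us the closed form ${\tilde{h}}^{(1)}({\tilde{\mPhi}}^{(1)}) = {\tilde{\mPhi}}^{(1)} \mLambda \tilde{\mT}$ on which both assertions rest.

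For the first assertion, $\mPhi \mG = {\tilde{h}}^{(1)}({\tilde{\mPhi}}^{(1)})$, I would simply chain the definitions: since $\tilde{\mPhi} = \tau(\mPhi)$ and $\tilde{h} = h \circ \tau^{-1}$, we have $\tilde{h}(\tilde{\mPhi}) = h(\tau^{-1}(\tau(\mPhi))) = h(\mPhi) = \mPhi \mG$, and combining this with the identification $\tilde{h}(\tilde{\mPhi}) = {\tilde{h}}^{(1)}({\tilde{\mPhi}}^{(1)})$ from the previous step yields the equality. For the second assertion I would read the isomorphism directly off the closed form: the map ${\tilde{\mPhi}}^{(1)} \mapsto {\tilde{\mPhi}}^{(1)} \mLambda \tilde{\mT}$ is right multiplication by $\mLambda \tilde{\mT}$, a product of two invertible $n \times n$ matrices, since $\mLambda = \mathrm{diag}(\lambda_1, \ldots, \lambda_n)$ has nonzero entries because $\mathrm{rank}\,\mG = n$, and $\tilde{\mT}$ is orthogonal hence invertible. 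Right multiplication by an invertible matrix is a linear bijection of $\mathbb{R}^{d_\rvz \times n}$ onto itself, with inverse given by right multiplication by $\tilde{\mT}^{\top} \mLambda^{-1}$; linearity is inherited from $\tilde{h}$, so ${\tilde{h}}^{(1)}$ is a linear isomorphism.

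I do not expect a genuine obstacle here, as the content is essentially bookkeeping. The one place that needs care is keeping the block partition and matrix dimensions consistent, so that the product $\tilde{\mPhi} \left[ \begin{array}{c} \mLambda \\ \vzero \end{array} \right] \tilde{\mT}$ is well defined and collapses cleanly to ${\tilde{\mPhi}}^{(1)} \mLambda \tilde{\mT}$, and confirming that the invertibility of $\mLambda$ really does follow from the rank hypothesis on $\mG$ (equivalently, that the full-rank $\mG$ has all $n$ singular values strictly positive). Everything else is direct substitution.
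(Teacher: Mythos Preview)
Your proposal is correct and follows essentially the same approach as the paper: both derive the block reduction $\tilde{h}(\tilde{\mPhi}) = {\tilde{\mPhi}}^{(1)} \mLambda \tilde{\mT}$ from Eq.~(\ref{eq:g-prime}) and use it to establish $\mPhi \mG = {\tilde{h}}^{(1)}({\tilde{\mPhi}}^{(1)})$. The only minor difference is in the isomorphism step: the paper argues injectivity from the kernel characterization $\ker \tilde{h} = \mathrm{span}[\vzero, {\tilde{\mPhi}}^{(2)}]$ of the previous lemma and then uses a dimension count for surjectivity, whereas you read off the explicit inverse $\tilde{\mT}^{\top} \mLambda^{-1}$ directly from the closed form, which is slightly more constructive but equivalent in content.
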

\begin{proof}
    From the definition, we have
    \begin{align*}
        \mPhi \mG 
        &= \tau^{-1} \left( \tilde{\mPhi} \right) \mG
        = \tilde{\mPhi} \mT \mG \\
        &= \tilde{\mPhi} \left( \mT \mG \tilde{\mT}^{\top} \right) \tilde{\mT}\\
        &= \left[{\tilde{\mPhi}}^{(1)}, {\tilde{\mPhi}}^{(2)} \right]         
        \left[
        \begin{array}{c}
            \mLambda  \\
            \vzero_{\left(d-n\times n\right)} 
        \end{array}
        \right]
        \tilde{\mT} \\
        &= \left[{\tilde{\mPhi}}^{(1)}, \vzero \right]         
        \left[
        \begin{array}{c}
            \mLambda  \\
            \vzero_{\left(d-n\times n\right)} 
        \end{array}
        \right]
        \tilde{\mT}\\
        &= \tau^{-1} \left( \left[ {\tilde{\mPhi}}^{(1)},\vzero \right] \right) \mG \\
        &= h \circ \tau^{-1} \left( \left[ {\tilde{\mPhi}}^{(1)},\vzero \right] \right)
        = \tilde{h} \left( \left[ {\tilde{\mPhi}}^{(1)}, \vzero \right] \right) \\
        &= {\tilde{h}}^{(1)} \left( {\tilde{\mPhi}}^{(1)} \right).
    \end{align*}

    By the definition, ${\tilde{h}}^{(1)}$ is linear.
    Here, ${\tilde{h}}^{(1)}$ is injective, since $\mathrm{ker} \  \tilde{h} = \mathrm{span} \left[\vzero_{d'\times n}, {\tilde{\mPhi}}^{(2)} \right]$, and hence, 
    $\mathrm{dim} \left( \mathrm{Im} \ {\tilde{h}}^{(1)} \right) \ge d_\rvz \times n$.
    Since $\mathrm{Im} \ {\tilde{h}}^{(1)} \subset \mathbb{R}^{d_\rvz \times n}$,
    ${\tilde{h}}^{(1)}$ is surjective.
\end{proof}

\begin{lem}
For $V: \mathbb{R}^{D} \ni \vphi \mapsto V \left(\mPhi \right) = U \left( \mX, f_{\rvz \mid \rvx} \left( \mX ; \mPhi \right) \right) \coloneqq \sum_{i=1}^n U \left( \vx^{(i)}, f_{\rvz \mid \rvx} \left( \vx^{(i)} ; \mPhi \right) \right) \in \mathbb{R}$, $\tilde{\mPhi} = \left[ {\tilde{\mPhi}}^{(1)}, {\tilde{\mPhi}}^{(2)} \right] \coloneqq \tau \left( \mPhi \right)$, $\tilde{V} \coloneqq V \circ \tau^{-1}$ and ${\tilde{V}}^{(1)} \left( {\tilde{\mPhi}}^{(1)} \right) \coloneqq \tilde{V} \left( \left[ {\tilde{\mPhi}}^{(1)}, \vzero_{d_\rvz \times (d-n)} \right] \right)$, 
Eq. (\ref{eq:ald_equation}) is equivalent to
\begin{align}
    d \tilde{\mPhi}^{(1)} &= - \nabla_{\tilde{\mPhi}^{(1)}} {\tilde{V}}^{(1)} \left( {\tilde{\mPhi}}^{(1)} \right) dt + \sqrt{2} dB, \label{eq:first-part}\\
    d \tilde{\mPhi}^{(2)} &= \sqrt{2} dB. \label{eq:second-part}
\end{align}
\end{lem}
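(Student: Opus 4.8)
The plan is to read \eqref{eq:ald_equation} as a Langevin diffusion on the parameter space $\mathbb{R}^{D}$ (identifying $\mathbb{R}^{d_\rvz \times d}$ with $\mathbb{R}^{D}$ under the Frobenius inner product) and to push it forward through the orthogonal change of variables $\tilde{\mPhi} = \tau \left( \mPhi \right)$ supplied by Lemma 1. Writing $\tau$ in matrix form as $\tilde{\vphi} = \mA \vphi$ with $\mA^{\top} \mA = \mI$, I would apply $\mA$ to both sides of \eqref{eq:ald_equation}. Because $\tau$ is linear, there is no second-order It\^o correction, so the transformed process obeys $d \tilde{\vphi} = - \mA \nabla_{\vphi} V \, dt + \sqrt{2} \, \mA \, dB$.

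Next I would record the two consequences of orthogonality. First, the diffusion is preserved: $\mA \, dB$ has instantaneous covariance $\mA \mA^{\top} \, dt = \mI \, dt$, so it is again a standard Brownian motion (which, following the paper's convention, I continue to write $B$) and the coefficient stays $\sqrt{2}$. Second, the drift is the gradient of the transformed potential: since $\tilde{V} = V \circ \tau^{-1}$ and $\tau^{-1}$ corresponds to $\mA^{\top}$, the chain rule gives $\nabla_{\tilde{\vphi}} \tilde{V} = \mA \nabla_{\vphi} V$, whence $- \mA \nabla_{\vphi} V = - \nabla_{\tilde{\vphi}} \tilde{V}$. Thus $\tilde{\mPhi}$ solves the Langevin SDE $d \tilde{\mPhi} = - \nabla_{\tilde{\mPhi}} \tilde{V} \, dt + \sqrt{2} \, dB$ with the single potential $\tilde{V}$. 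It is exactly here that orthogonality is essential: a non-orthogonal $\tau$ would distort the diffusion coefficient and break the drift--gradient identity, so I regard this step as the crux, even though for a linear orthogonal map it is a routine stochastic-calculus computation.

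The remaining work is to separate the two blocks. The potential $V$ depends on $\mPhi$ only through the latent codes $\mPhi \mG = h \left( \mPhi \right)$, since each summand $U \left( \vx^{(i)}, \mPhi g \left( \vx^{(i)} \right) \right)$ uses the $i$-th column of $h \left( \mPhi \right)$. Hence $\tilde{V} = V \circ \tau^{-1}$ factors through $\tilde{h} = h \circ \tau^{-1}$, and Lemma 1 gives $\mathrm{ker} \, \tilde{h} = \mathrm{span} \left[ \vzero, {\tilde{\mPhi}}^{(2)} \right]$; equivalently $\tilde{h} \left( \left[ {\tilde{\mPhi}}^{(1)}, {\tilde{\mPhi}}^{(2)} \right] \right)$ is independent of ${\tilde{\mPhi}}^{(2)}$. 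Therefore $\nabla_{{\tilde{\mPhi}}^{(2)}} \tilde{V} = \vzero$ and $\nabla_{{\tilde{\mPhi}}^{(1)}} \tilde{V} = \nabla_{{\tilde{\mPhi}}^{(1)}} {\tilde{V}}^{(1)}$, with ${\tilde{V}}^{(1)} \left( {\tilde{\mPhi}}^{(1)} \right) = \tilde{V} \left( \left[ {\tilde{\mPhi}}^{(1)}, \vzero \right] \right)$ as defined in the statement. Splitting the vector SDE $d \tilde{\mPhi} = - \nabla_{\tilde{\mPhi}} \tilde{V} \, dt + \sqrt{2} \, dB$ into its ${\tilde{\mPhi}}^{(1)}$ and ${\tilde{\mPhi}}^{(2)}$ components and inserting these two gradient identities reduces the first component to \eqref{eq:first-part} and the second to the driftless \eqref{eq:second-part}, which establishes the claimed equivalence.
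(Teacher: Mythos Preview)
Your proof is correct and follows essentially the same approach as the paper: push the Langevin SDE through the orthogonal map $\tau$, use orthogonality to preserve both the Brownian term and the gradient structure of the drift, and then invoke the kernel description from Lemma~1 to see that $\tilde{V}$ is independent of ${\tilde{\mPhi}}^{(2)}$, which yields the block decomposition \eqref{eq:first-part}--\eqref{eq:second-part}. The only cosmetic difference is the order of presentation---the paper first computes $\tilde{V}\big(\big[{\tilde{\mPhi}}^{(1)},{\tilde{\mPhi}}^{(2)}\big]\big)=\tilde{V}\big(\big[{\tilde{\mPhi}}^{(1)},\vzero\big]\big)$ and then transforms the SDE, whereas you transform first and split afterwards---but the substance is identical.
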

\begin{proof}
By direct calculation, we obtain 
\small
\begin{align}
    &\tilde{V} \left( \left[ {\tilde{\mPhi}}^{(1)}, {\tilde{\mPhi}}^{(2)} \right] \right) \\
    &= V \circ \tau^{-1} \left( \left[ {\tilde{\mPhi}}^{(1)}, {\tilde{\mPhi}}^{(2)} \right] \right)\nonumber\\
    &= U \left( \mX, f_{\rvz \mid \rvx} \left( \mX ; \tau^{-1} \left( \left[ {\tilde{\mPhi}}^{(1)}, {\tilde{\mPhi}}^{(2)} \right] \right) \right) \right)\nonumber\\
    &= U \left( \mX, h \left( \tau^{-1} \left( \left[{\tilde{\mPhi}}^{(1)},{\tilde{\mPhi}}^{(2)} \right] \right) \right) \right)\nonumber\\
    &= U \left( \mX, h \circ \tau^{-1} \left( \left[ {\tilde{\mPhi}}^{(1)}, \vzero \right] \right) + h\circ \tau^{-1} \left( \left[ \vzero, {\tilde{\mPhi}}^{(2)} \right] \right) \right) \nonumber\\
    &= U \left( \mX, h \left( \tau^{-1} \left( \left[ {\tilde{\mPhi}}^{(1)}, \vzero \right] \right) \right) \right)\nonumber\\
    &= U \left( \mX, f_{\rvz \mid \rvx} \left( \mX ; \tau^{-1} \left( \left[ {\tilde{\mPhi}}^{(1)},\vzero \right] \right) \right) \right) \nonumber\\
    &= V \circ \tau^{-1} \left( \left[ {\tilde{\mPhi}}^{(1)},\vzero \right] \right) \nonumber\\
    &= \tilde{V} \left( \left[ {\tilde{\mPhi}}^{(1)}, \vzero \right] \right). \label{eq:independence}
\end{align}
\normalsize

Then, the following equivalence holds:
\begin{align}
    &d \mPhi = - \nabla_{\mPhi} V \left( \mPhi \right) dt + \sqrt{2} dB, \nonumber\\
    &\Leftrightarrow d \tau^{-1} \left( \mPhi \right) = - \tau^\top \left( \nabla_{\tilde{\mPhi}} \tilde{V} \left( \tilde{\mPhi} \right) \right) dt + \sqrt{2} dB \nonumber\\
    &\begin{aligned}
        \Leftrightarrow d \tilde{\mPhi}
        &= - \tau \circ \tau^\top \left( \nabla_{\tilde{\mPhi}} \tilde{V} \left( \tilde{\mPhi} \right) \right) dt + \sqrt{2} d \tau \left( B \right) \\
        &= - \nabla_{\tilde{\mPhi}} \tilde{V} \left( \tilde{\mPhi} \right) dt + \sqrt{2} dB, \label{eq:pre-langevin-prime}
    \end{aligned}
\end{align}
where we used $\tau \circ \tau^{\top} = \mathrm{id}$ because $\tau$ is orthogonal.
From Eq. (\ref{eq:independence}),
the dynamics in Eq. (\ref{eq:pre-langevin-prime}) is equivalent to
Eq. (\ref{eq:first-part}) and Eq. (\ref{eq:second-part}).
\end{proof}

In the following, we prove Theorem \ref{thm:convergence} using the above lemmas.

Because the latent variables $\mZ \coloneqq \tilde{h}^{(1)} \left( {\tilde{\mPhi}}^{(1)} \right)$ are independent of ${\tilde{\mPhi}}^{(2)}$, the stationary distribution $q \left( \mZ \mid \mX \right)$ is given by $\left( {\tilde{h}}^{(1)} \right)_{\#} \left( p_\ast^{(1)} \right) \left( \mZ \right)$, which is the pushforward measure of the probability distribution $p^{(1)} \left( \tilde{\mPhi} \right)$ by ${\tilde{h}}^{(1)}$.
Then, we have
\begin{align*}
    &q \left( \mZ \mid \mX \right) \\
    &= \left( {\tilde{h}}^{(1)} \right)_{\#} \left( p_\ast^{(1)} \right) \left( \mZ \right) \\
    &= p^{(1)} \left( \left( {\tilde{h}}^{(1)} \right)^{-1} \left( \mZ \right) \right) \left| \mathrm{det} \frac{d ({\tilde{h}}^{(1)})^{-1}}{d \mZ} \right|\\ 
    &= p^{(1)} \left( \left( {\tilde{h}}^{(1)} \right)^{-1} \left( \mZ \right) \right) \left| \mathrm{det} \frac{d {\tilde{h}}^{(1)}} {d {\tilde{\mPhi}}^{(1)}} \right|^{-1}\\ 
    &= p^{(1)} \left( \left( {\tilde{h}}^{(1)} \right)^{-1} \left( \mZ \right) \right) \times \left| \mathrm{det} {\tilde{h}}^{(1)} \right|^{-1}\\
    &\propto \exp \left( - \tilde{V} \left( \left( {\tilde{h}}^{(1)} \right)^{-1} \left( \mZ \right) \right) \right)\\
    &= \exp \left( - V \left( \tau^{-1} \left( \left[ \left( {\tilde{h}}^{(1)} \right)^{-1} \left( \mZ \right),\vzero \right] \right) \right) \right)\\
    &= \exp \left( - U \left( \mX, \tau^{-1} \left( \left[ \left( {\tilde{h}}^{(1)} \right)^{-1} \left( \mZ \right),\vzero \right] \right) \mG \right) \right)\\
    &= \exp \left( - U \left( \mX, \mZ \right) \right),
\end{align*}
where we used that $\frac{d {\tilde{h}}^{(1)}}{d {\tilde{\mPhi}}^{(1)}} = {\tilde{h}}^{(1)}$ because of the linearity of ${\tilde{h}}^{(1)}$ and is constant with respect to $\mZ$.
The last equation is derived as follows.
From Lemma \ref{lem:g-prime},
$\mPhi \mG = {\tilde{h}}^{(1)} \left( {\tilde{\mPhi}}^{(1)} \right)$ holds when $\tilde{\mPhi} = \left[ {\tilde{\mPhi}}^{(1)},{\tilde{\mPhi}}^{(2)} \right] = \tau \left( \mPhi \right)$.
Thus, 
when $\mPhi = \tau^{-1} \left( \left[ {\tilde{\mPhi}}^{(1)}, \vzero \right] \right)$,
we obtain ${\tilde{h}}^{(1)} \left( {\tilde{\mPhi}}^{(1)} \right) = \mPhi \mG = \tau^{-1} \left( \left[ {\tilde{\mPhi}}^{(1)},\vzero \right] \right) \mG$.
In particular, for ${\tilde{\mPhi}}^{(1)}= \left( {\tilde{h}}^{(1)} \right)^{-1} \left( \mZ \right)$,
we have 
\begin{align*}
    \mZ 
    &= {\tilde{h}}^{(1)} \left( \left( {\tilde{h}}^{(1)} \right)^{-1} \left( \mZ \right) \right) \\
    &= \tau^{-1} \left( \left[ \left( {\tilde{h}}^{(1)} \right)^{-1} \left( \mZ \right), \vzero \right] \right) \mG.
\end{align*}
\qed

\section{Experimental Settings}
\label{sec:experimental_setting}

    \subsection{Neural likelihood example}
    \label{subsec:neural_example}
        \begin{figure*}[t]
            \centering
            \begin{center}
                \begin{minipage}{0.16\hsize}
                    \centering
                    {\bf GT}
                \end{minipage}
                \begin{minipage}{0.16\hsize}
                    \centering
                    {\bf VI\\(diag.)}
                \end{minipage}
                \begin{minipage}{0.16\hsize}
                    \centering
                    {\bf VI\\(full)}
                \end{minipage}
                \begin{minipage}{0.16\hsize}
                    \centering
                    {\bf ALD\\(hist.)}
                \end{minipage}
                \begin{minipage}{0.16\hsize}
                    \centering
                    {\bf ALD\\(sample)}
                \end{minipage}\\
                \begin{minipage}{0.16\hsize}
                    \centering
                    {\setlength{\fboxsep}{0pt}\fbox{\includegraphics[width=0.99\hsize]{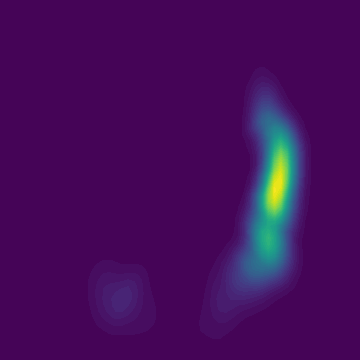}}}
                \end{minipage}
                \begin{minipage}{0.16\hsize}
                    \centering
                    {\setlength{\fboxsep}{0pt}\fbox{\includegraphics[width=0.99\hsize]{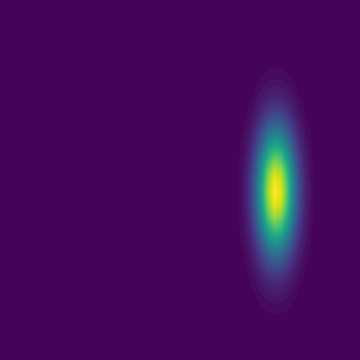}}}
                \end{minipage}
                \begin{minipage}{0.16\hsize}
                    \centering
                    {\setlength{\fboxsep}{0pt}\fbox{\includegraphics[width=0.99\hsize]{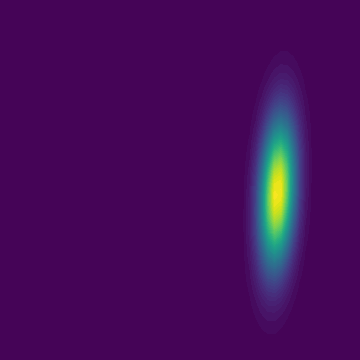}}}
                \end{minipage}
                \begin{minipage}{0.16\hsize}
                    \centering
                    {\setlength{\fboxsep}{0pt}\fbox{\includegraphics[width=0.99\hsize]{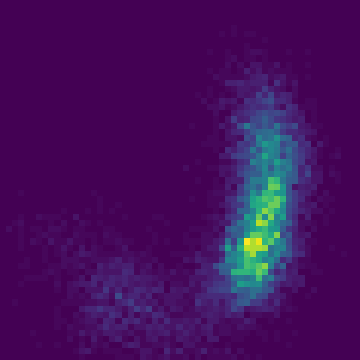}}}
                \end{minipage}
                \begin{minipage}{0.16\hsize}
                    \centering
                    {\setlength{\fboxsep}{0pt}\fbox{\includegraphics[width=0.99\hsize]{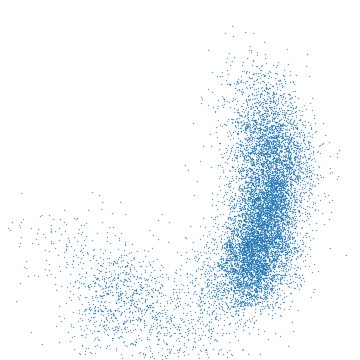}}}
                \end{minipage}\\
                \begin{minipage}{0.16\hsize}
                    \centering
                    {\setlength{\fboxsep}{0pt}\fbox{\includegraphics[width=0.99\hsize]{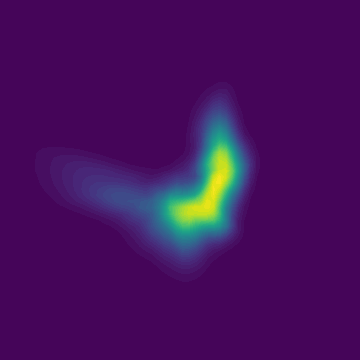}}}
                \end{minipage}
                \begin{minipage}{0.16\hsize}
                    \centering
                    {\setlength{\fboxsep}{0pt}\fbox{\includegraphics[width=0.99\hsize]{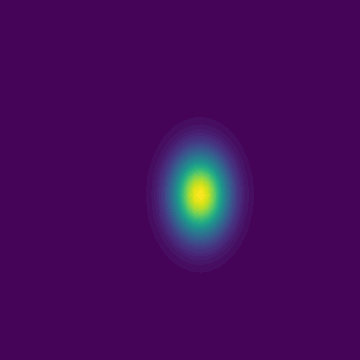}}}
                \end{minipage}
                \begin{minipage}{0.16\hsize}
                    \centering
                    {\setlength{\fboxsep}{0pt}\fbox{\includegraphics[width=0.99\hsize]{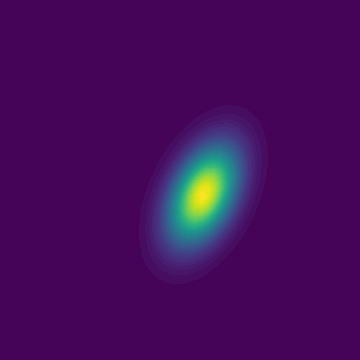}}}
                \end{minipage}
                \begin{minipage}{0.16\hsize}
                    \centering
                    {\setlength{\fboxsep}{0pt}\fbox{\includegraphics[width=0.99\hsize]{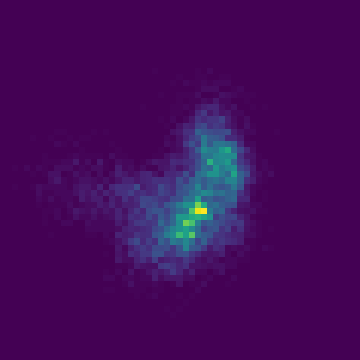}}}
                \end{minipage}
                \begin{minipage}{0.16\hsize}
                    \centering
                    {\setlength{\fboxsep}{0pt}\fbox{\includegraphics[width=0.99\hsize]{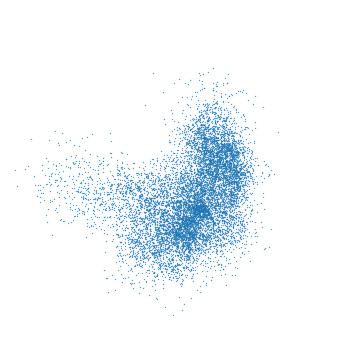}}}
                \end{minipage}\\
                \begin{minipage}{0.16\hsize}
                    \centering
                    {\setlength{\fboxsep}{0pt}\fbox{\includegraphics[width=0.99\hsize]{figures/neural/gt-3.png}}}
                \end{minipage}
                \begin{minipage}{0.16\hsize}
                    \centering
                    {\setlength{\fboxsep}{0pt}\fbox{\includegraphics[width=0.99\hsize]{figures/neural/diagonal_avi-3.png}}}
                \end{minipage}
                \begin{minipage}{0.16\hsize}
                    \centering
                    {\setlength{\fboxsep}{0pt}\fbox{\includegraphics[width=0.99\hsize]{figures/neural/full_avi-3.png}}}
                \end{minipage}
                \begin{minipage}{0.16\hsize}
                    \centering
                    {\setlength{\fboxsep}{0pt}\fbox{\includegraphics[width=0.99\hsize]{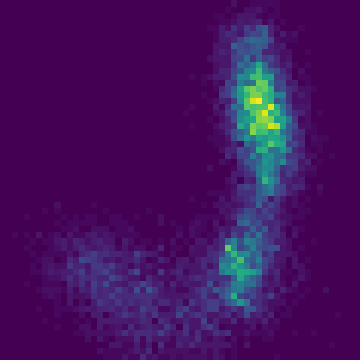}}}
                \end{minipage}
                \begin{minipage}{0.16\hsize}
                    \centering

                    {\setlength{\fboxsep}{0pt}\fbox{\includegraphics[width=0.99\hsize]{figures/neural/ald_sample-3.png}}}
                \end{minipage}
            \end{center}
            \caption{Neural likelihood experiments.}
            \label{fig:neural_experiment}
        \end{figure*}
        We perform an experiment with a complex posterior, wherein the likelihood is defined with a randomly initialized neural network $f_\theta$. Particularly, we parameterize $f_\theta$ by four fully-connected layers of 128 units with ReLU activation and two dimensional outputs like $p \left( \rvx \mid \vz \right) = \mathcal{N} \left( f_\theta \left( \vz \right), \sigma_x^2 I \right)$. We initialize the weight and bias parameters with $\mathcal{N} \left(0, 0.2 I \right)$ and $\mathcal{N} \left(0, 0.1 I\right)$, respectively. In addition, we set the observation variance $\sigma_x$ to 0.25. We used the same neural network architecture for the encoder $f_{ \bm{\vphi}}$. Other settings are same as the conjugate Gaussian experiment described in Section \ref{sec:toy_example}.
            
        The results are shown in Figure \ref{fig:neural_experiment}. The left three columns show the density visualizations of the ground truth or approximation posteriors of VI methods; the right two columns show the visualizations of 2D histograms and samples obtained using ALD. For VI method, we use two different models. One uses diagonal Gaussians, i.e., $\mathcal{N} \left( \mu \left( \vx ; \vphi \right), \mathrm{diag} \left( \sigma^2 \left( \vx ; \vphi \right) \right) \right)$, for the variational distribution, and the oher uses Gaussians with full covariance $\mathcal{N} \left( \mu \left( \vx ; \vphi \right), \Sigma \left( \vx ; \vphi \right) \right)$. From the density visualization of GT, the true posterior is multimodal and skewed; this leads to the failure of the Gaussian VI methods notwithstanding considering covariance. In contrast, the samples of ALD accurately capture such a complex distribution, because ALD does not need to assume any tractable distributions for approximating the true posteriors. 
        The samples of ALD capture well the multimodal and skewed posterior, while Gaussian VI methods fail it even when considering covariance.
        
        
    \subsection{Image Generation}
    \label{subsec:image_generation_setting}
        For the image generation experiments, we use a standard Gaussian disrtibution $\mathcal{N} \left( \vz; \vzero, \mI \right)$ for the latent prior.
        The latent dimensionality is set to $8$ for MNIST, $16$ for SVHN, and $32$ for CIFAR-10 and CelebA.
        The raw images, which take the values in $\left\{ 0, 1, \ldots, 255 \right\}$, are scaled into the range from $-1$ to $1$ via preprocessing.
        Because the values of the preprocessed images are not continuous in a precise sense due to the quantization, it is not desirable to use continuous distributions, e.g., Gaussians, for the likelihood function $p \left( \vx \mid \vz ; \vtheta \right)$. 
        Therefore, we define the likelihood using a discretized logistic distribution \citep{salimans2017pixelcnn++} as follows:
        \begin{gather}
            \begin{aligned}
                p \left( \vx \mid \vz ; \vtheta \right) 
                &= \prod_i^{d_\rvx} \int_{a_i}^{b_i} \mathrm{Logistic} \left( \tilde{\evx}_i ; \evmu_i, s \right) d\tilde{\evx}_i, \\
                &= \prod_i^{d_\rvx} \left( \sigma \left( \frac{b_i - \evmu_i}{s} \right) - \sigma \left( \frac{a_i - \evmu_i}{s} \right) \right), \\
                a_i &= 
            \begin{cases}
                - \infty & \evx = -1 \\
                \evx_i - \frac{1}{255} & \mathrm{otherwise}
            \end{cases}, \\
            b_i &= 
            \begin{cases}
                \infty & \evx = 1 \\
                \evx_i + \frac{1}{255} & \mathrm{otherwise}
            \end{cases},
            \end{aligned}
        \end{gather}
        where $\vmu \coloneqq f_{\rvx \mid \rvz} \left( \vz ; \vtheta \right)$, $f_{\rvx \mid \rvz}: \mathbb{R}^{d_\rvz} \to \mathbb{R}^{d_\rvx}$. $\mathrm{Logistic} \left( \cdot ; \mu, s \right)$ is the density function of a logistic distribution with the location parameter $\mu$ and the scale parameter $s$, and $\sigma$ is the logistic sigmoid function.
        We use a neural network with four fully-connected layers for the decoder function $f_{\rvx \mid \rvz}$.
        The number of hidden units are set to 1,024, and ReLU is used for the activation function.
        Before each activation, we apply the layer normalization \citep{ba2016layer} to stabilize training.
        The scale parameter $s$ is also optimized in the training.
        Because it has a constraint of $s > 0$, we parameterize $s = \zeta \left( b \right)^{-1/2}$, where $\zeta$ is the softplus function, and treat $b$ as a learnable parameter instead.
        When the model has sufficiently high expressive power, $b$ may diverge to infinity \citep{rezende2018taming}, so we add a regularization term of $\left( b + 2 \zeta \left( - b \right) \right) / m$ to the loss function, where $m$ is the number of training examples.
        This regularization corresponds to assuming a standard logistic distribution $\mathrm{Logistic} \left( b; 0, 1 \right)$ for the prior distribution of $b$.
        We optimize the models using stochastic gradient ascent with the learning rate of $1 \times 10^{-4}$ and the batch size of 100.
        We run two steps of ALD iterations, i.e., $T = 2$ in Algorithm \ref{alg:lae}, and the step size $\eta$ is set to $1 \times 10^{-4}$.
        We use the same experimental settings for the baseline models.
        We run the training iterations for 50 epochs for MNIST, SVHN, CIFAR-10 and 20 epochs for CelebA.
        The implementation is available at \url{https://github.com/iShohei220/LAE}.
        
    \subsection{Datasets}
        All the dataset we use in the experiment is public for non-commercial research purposes.
        MNIST \citep{lecun1998gradient}, SVHN \citep{netzer2011reading}, CIFAR-10 \citep{krizhevsky2009learning}, CelebA \citep{liu2015faceattributes} are available at \url{http://yann.lecun.com/exdb/mnist/},  \url{http://ufldl.stanford.edu/housenumbers}, \url{https://www.cs.toronto.edu/~kriz/cifar.html}, \url{http://mmlab.ie.cuhk.edu.hk/projects/CelebA.html}, and \url{https://github.com/tkarras/progressive_growing_of_gans}, respectively.
        The images of CelebA are resized to $32 \times 32$ in the experiment.
        We use the default settings of data splits for all datasets.
        
    \subsection{Computational Resources}
        We run all the experiments on AI Bridging Cloud Infrastructure (ABCI), which is a large-scale computing infrastructure provided by National Institute of Advanced Industrial Science and Technology (AIST).
        The experiments are performed on Computing Nodes (V) of ABCI, each of which has four NVIDIA V100 GPU accelerators, two Intel Xeon Gold 6148, one NVMe SSD, 384GiB memory, two InfiniBand EDR ports (100Gbps each).
        Please see \url{https://abci.ai} for more details.
        
\section{Additional Experiments}
    In the main result in Section \ref{sec:experiment}, we fix the number of MCMC iterations (i.e., $T$) for the model of \citet{hoffman2017learning}.
    In this additional experiment, we further investigate the effect of $T$ by changing it from $0$ to $10$.
    Note that when $T=0$, the model is identical to the normal VAE.
    The result is shown in Table \ref{tab:additional_quantitative_result}.
    It can be seen that the effect is relatively small, and our LAE (with $T=2$) shows better performance than all cases.

\begin{table}[t]
    \centering
    \caption{Effects of the number of MCMC iterations of \citet{hoffman2017learning}. We report the mean and standard deviation of the negative evidence lower bound per data dimension in three different seeds. Lower is better.}
    \small
    \begin{tabular}{llcccc}
        \toprule
                                                &&MNIST                      &SVHN                       &CIFAR-10                   &CelebA \\ \midrule
        \citet{hoffman2017learning} &($T=0$)     &$1.189 \pm 0.002$          &$4.442 \pm 0.003$          &$4.820 \pm 0.005$          &$4.671 \pm 0.001$ \\
        \citet{hoffman2017learning} &($T=2$)     &$1.189 \pm 0.002$          &$4.440 \pm 0.007$          &$4.831 \pm 0.005$          &$4.662 \pm 0.011$ \\ 
        \citet{hoffman2017learning} &($T=10$)    &$1.188 \pm 0.001$          &$4.437 \pm 0.009$          &$4.832 \pm 0.006$          &$4.664 \pm 0.004$ \\ 
        LAE &($T=2$)                             &$\mathbf{1.177} \pm 0.001$ &$\mathbf{4.412} \pm 0.002$ &$\mathbf{4.773} \pm 0.003$ &$\mathbf{4.636} \pm 0.003$ \\\bottomrule
    \end{tabular}
    \label{tab:additional_quantitative_result}
\end{table}


\end{document}